%% file: main.tex
\documentclass[11pt]{article}
\usepackage[margin=1in]{geometry}
\usepackage{amsmath,amssymb,amsthm,mathtools}
\usepackage{enumitem}
\usepackage[hidelinks,pdftitle={Thompson Sampling for Non-Monotone Convex Ridge Bandits: Monotonicity Is Not Needed for Polynomial Regret},pdfauthor={Xuan Li},pdfsubject={Bandit convex optimisation; Thompson sampling},pdfkeywords={bandit convex optimisation, Thompson sampling, ridge functions, information ratio}]{hyperref}
\usepackage{microtype}

\newtheorem{theorem}{Theorem}[section]
\newtheorem{lemma}[theorem]{Lemma}
\newtheorem{proposition}[theorem]{Proposition}

\newtheorem{definition}[theorem]{Definition}
\theoremstyle{remark}

\newcommand{\R}{\mathbb{R}}
\newcommand{\E}{\mathbb{E}}
\newcommand{\PP}{\mathbb{P}}
\newcommand{\Fblr}{\mathcal{F}_{\mathrm{blr}}}
\newcommand{\Fblrm}{\mathcal{F}_{\mathrm{blrm}}}
\newcommand{\Fbl}{\mathcal{F}_{\mathrm{bl}}}
\newcommand{\IR}{\mathrm{IR}}
\newcommand{\TS}{\mathrm{TS}}
\newcommand{\BReg}{\mathrm{BReg}}
\newcommand{\PAIR}{\mathrm{PAIR}}
\newcommand{\conv}{\mathrm{conv}}
\newcommand{\diam}{\mathrm{diam}}
\newcommand{\rank}{\mathrm{rank}}
\newcommand{\tr}{\mathrm{tr}}
\newcommand{\argmin}{\mathrm{arg\,min}}
\newcommand{\ip}[2]{\langle #1, #2\rangle}
\newcommand{\norm}[1]{\lVert #1\rVert}
\newcommand{\fbar}{\bar f}

\title{Thompson Sampling for Non-Monotone Convex Ridge Bandits:\\ Monotonicity Is Not Needed for Polynomial Regret}
\author{Xuan Li\\[2pt] \normalsize University of New South Wales, Sydney, Australia \textperiodcentered\ \texttt{winny.li@unsw.edu.au}\\ \href{https://orcid.org/0009-0002-0213-6991}{ORCID: 0009-0002-0213-6991}}
\date{\today}

\begin{document}
\maketitle

\begin{abstract}
Bakhtiari, Lattimore and Szepesv\'ari (COLT 2025) proved that Thompson sampling (TS) has Bayesian regret $\tilde O(d^{5/2}\sqrt n)$ for bandit convex optimisation with convex \emph{monotone} ridge losses $f(x)=\ell(\ip{x}{\theta})$, and asked whether monotonicity of the link is necessary. We give a qualitative negative answer. For every prior on $[0,1]$-valued, $1$-Lipschitz convex ridge losses with an arbitrary convex, possibly non-monotone, link, and for any fixed measurable selection of minimisers, exact-posterior TS has Bayesian regret $O\big((d+1)^4\sqrt{dn}\,\log(e+nd\max\{1,\diam K\})\big)=\tilde O(d^{9/2}\sqrt n)$. The monotone proof relies on a single-removal John-ellipsoid dichotomy; we show by an explicit twelve-point configuration that this dichotomy fails for non-monotone links, and replace it by an $O(d^2)$ cardinality bound for ``uninformative'' configurations. The bound uses a Boolean rounding argument: a $0$-$1$ matrix within $1/(4r)$ in max-norm of a rank-$r$ matrix has rank at most $2r-1$. We construct $d(d+1)$ uninformative losses, showing that the cardinality bound is tight up to constants in the large-diameter-to-gap regime, and give a self-contained information-ratio-to-regret transfer that is uniform over fixed measurable selections. Whether the $d^{5/2}$ dependence of the monotone case can be retained remains open.
\end{abstract}

\section{Introduction}

Bayesian bandit convex optimisation is the following game. A convex body $K\subset\R^d$ and a class $\mathcal F$ of convex functions $K\to[0,1]$ are known, together with a prior $\xi$ on $\mathcal F$. The environment samples $f\sim\xi$ once. In each round $t=1,\dots,n$ the learner plays $X_t\in K$ and observes $Y_t\in\{0,1\}$ with $\E[Y_t\mid X_1,Y_1,\dots,X_t,f]=f(X_t)$. The Bayesian regret of a learner $\mathcal A$ is
\[
\BReg_n(\mathcal A,\xi)=\E\Big[\sup_{x\in K}\sum_{t=1}^n\big(f(X_t)-f(x)\big)\Big].
\]
Thompson sampling (TS) samples $f_t$ from the posterior in every round and plays a minimiser $X_t=x_{f_t}$ of the sampled function. Bakhtiari, Lattimore and Szepesv\'ari~\cite{BLS25} analysed TS in this setting via the information ratio. Among their results, they showed that if $\xi$ is supported on \emph{monotone} convex ridge functions, i.e.\ $f(x)=\ell(\ip{x}{\theta})$ with $\ell:\R\to\R$ convex and non-decreasing, then $\BReg_n(\TS,\xi)=O(d^{2.5}\sqrt n\log^2(nd\,\diam K))$; and they showed that TS can fail catastrophically on general convex losses. In their discussion they write:
\begin{quote}
``At present we are uncertain whether or not the monotonicity assumption is needed in the ridge setting. Our best guess is that it is not.''
\end{quote}
The monotone ridge class is a Bayesian version of the generalised linear bandit with an unknown convex, increasing link. Dropping monotonicity allows links such as $\ell(s)=|s-s_0|$ whose minimising set on $K$ is, when $d\ge2$ and $s_0$ lies in the interior of the projection interval $\{\ip{x}{\theta}:x\in K\}$, a whole hyperplane section; minimisers of convex ridge losses can thus be non-unique, so the tie-breaking rule of TS matters, and the geometry that drives the monotone analysis (an ordering of minimisers along the ridge direction) is lost.

Our goal is statistical rather than computational: we ask whether the canonical exact-posterior TS rule itself is safe on this structured class. The existence of low-regret algorithms for convex ridge links does not settle this question. Lattimore~\cite{Lattimore21} obtains $O(d\sqrt n\log(nD))$ minimax regret for an adversarial ridge model with a different algorithm, but the question here is algorithm-specific: \cite{BLS25} show that standard TS can fail catastrophically on general high-dimensional convex losses, so structure is genuinely needed to certify TS itself, and the monotone ridge class was the one for which they could do so. Our result identifies convex ridge structure as sufficient even when the link is non-monotone: a qualitative robustness statement about the canonical Bayesian sampling rule, not a minimax-optimality claim.

\paragraph{Contributions.}
Let $\Fblr$ be the class of convex ridge functions $K\to[0,1]$ that are $1$-Lipschitz, with an arbitrary convex link (Section~\ref{sec:setting}).
\begin{enumerate}[leftmargin=2em]
\item \textbf{Regret bound without monotonicity (Theorem~\ref{thm:main}).} For every prior on $\Fblr$ and every fixed measurable selection of minimisers, TS satisfies $\BReg_n(\TS,\xi)=\tilde O(d^{9/2}\sqrt n)$; explicitly, $\BReg_n(\TS,\xi)\le 7+73{,}728\sqrt3\,(d+1)^4 d^{1/2}\sqrt n\,\log(e+nd\max\{1,\diam K\})$. This resolves the qualitative question of whether monotonicity is necessary for polynomial-in-$d$ Bayesian regret of exact-posterior TS; the quantitative question of matching the $d^{5/2}$ dependence of the monotone case remains open, and is discussed in Section~\ref{sec:discussion}.
\item \textbf{A cardinality bound for uninformative configurations (Theorem~\ref{thm:size}).} The information-ratio machinery of~\cite{BLS25} reduces to showing that a finite set of loss functions whose minimisers reveal nothing about one another is small. In the monotone case this is proved by a John-ellipsoid dichotomy. We show that this single-removal dichotomy has no non-monotone analogue (Proposition~\ref{prop:john}) and prove directly that every such configuration has at most $3(d+1)^2-(d+1)-1$ elements. The key tool is an elementary rounding lemma (Lemma~\ref{lem:round}): if a $0$-$1$ matrix is entrywise within $1/(4r)$ of a matrix of rank $r$ then its rank is at most $2r-1$. The bound is tight up to a constant: Proposition~\ref{prop:lower} exhibits uninformative configurations of size $d(d+1)$.
\item \textbf{Transfer from information ratio to regret under any fixed measurable selection rule (Theorem~\ref{thm:transfer}).} The covering argument of~\cite{BLS25} uses that ties among minimisers are broken consistently. Since minimisers of convex ridge losses can be non-unique, we give a self-contained transfer theorem for exact TS, valid for every fixed measurable selection rule, using a cover by infimal-convolution approximations and an information-ratio bound that is uniform over selection rules. The rule is arbitrary but fixed in advance; history-dependent tie-breaking is not covered.
\end{enumerate}

\paragraph{Related work.}
Ridge (single-index) bandits have been studied under different learning criteria and for different algorithms. Lattimore~\cite{Lattimore21} proves $O(d\sqrt n\log(nD))$ minimax regret for an adversarial ridge model with a hidden fixed direction and convex links that may vary over time, by information-theoretic arguments and minimax duality; as noted above, this establishes learnability of the ridge structure without bearing on standard TS under an arbitrary prior. Bakhtiari et al.~\cite{BLS25} analyse exact TS directly and prove the monotone ridge bound; the present paper removes monotonicity within the ridge structure. Rajaraman, Han, Jiao and Ramchandran~\cite{RHJR24}, Rajaraman and Han~\cite{RH26a} and Kang et al.~\cite{Kang26} study frequentist nonlinear ridge, single-index and contextual single-index bandits with algorithms other than TS, and Rajaraman and Han~\cite{RH26b} give minimax lower bounds for general stochastic bandit convex optimisation, which quantify the difficulty of general convex losses but are not specific to non-monotone ridge losses or to TS. None of these results provides a Bayesian regret guarantee for exact-posterior TS over arbitrary convex non-monotone ridge links. We do not address the \emph{second} question of the same paragraph of~\cite{BLS25} (the information ratio with a \emph{known} link); see also the MSc thesis of Bakhtiari~\cite{BakhtiariThesis}. Information-ratio analyses of TS go back to~\cite{RVR16}; the convex-bandit machinery we build on is from~\cite{BDKP15,BE18,Lattimore20,BLS25}.

\section{Setting and notation}\label{sec:setting}

Throughout, $K\subset\R^d$ is a convex body (compact, convex, non-empty interior) with $0\in K$, and $D=\diam(K)$. A function $f:K\to\R$ is a \emph{convex ridge function} if $f(x)=\ell(\ip{x}{\theta})$ for some convex $\ell:\R\to\R$ and $\theta\in\R^d$ ($\theta=0$ is allowed). We write
\[
\Fbl=\{f:K\to[0,1]\ \text{convex},\ \mathrm{Lip}(f)\le1\},\qquad
\Fblr=\{f\in\Fbl:\ f\text{ is a convex ridge function}\},
\]
and $\Fblrm\subset\Fblr$ for the subclass with non-decreasing link. The classes $\Fbl$ and $\Fblrm$ are those of~\cite{BLS25}; $\Fblr$ is not treated there.

\paragraph{Measurability and tie-breaking (standing convention).}
We equip $\Fblr$ with a $\sigma$-algebra $\mathcal A$ for which $f\mapsto f(x)$ is measurable for every $x\in K$ and a measurable selection $f\mapsto x_f\in\argmin_K f$ has been fixed once and for all (such selections exist; see~\cite[Appendix~B]{BLS25}). Pointwise measurability makes the identity $(\Fblr,\mathcal A)\to(C(K),\mathcal B)$ measurable, where $\mathcal B$ is the Borel $\sigma$-algebra of the uniform norm; $\mathcal A$ may be strictly finer than $\mathcal B$. All approximating functions constructed in Section~\ref{sec:transfer} are measurable as $(C(K),\mathcal B)$-valued maps, and there the information-ratio hypothesis is only applied to finitely-valued random elements of $\Fblr$, which are $\mathcal A$-measurable automatically; this is what allows arbitrary priors and arbitrary fixed selections. All statements hold for \emph{every} such selection rule, but the rule is fixed in advance: it may not depend on the history of the game. We write $f^\star=\min_K f=f(x_f)$. For a probability measure $\xi$ on $\Fblr$ we write $\fbar=\E_\xi[f]$ (pointwise), a convex function $K\to[0,1]$ which in general is \emph{not} a finite convex combination of elements of $\Fblr$ (for instance the average of $x\mapsto\max\{\ip{x}{\theta},0\}$ over uniformly random directions in the plane is a multiple of the Euclidean norm); the arguments below only use pointwise values of $\fbar$.

\paragraph{Thompson sampling.}
TS with prior $\xi$ is Algorithm~1 of~\cite{BLS25}: in round $t$ sample $f_t$ from the posterior $\PP(f\in\cdot\mid X_1,Y_1,\dots,X_{t-1},Y_{t-1})$ and play $X_t=x_{f_t}$. The observation model is Bernoulli: $Y_t\in\{0,1\}$ with $\E[Y_t\mid X_1,Y_1,\dots,X_t,f]=f(X_t)$.

\paragraph{Regret and information terms.}
Unless otherwise stated, a random function $h:K\to[0,1]$ is a Borel random element of $C(K)$ (so $(\omega,x)\mapsto h_\omega(x)$ is jointly measurable and $h^\star=\min_Kh$ is measurable); laws on $\Fbl$, and laws on $(\Fblr,\mathcal A)$ through the identity map, are included. For the law $\nu$ of such a random function and a policy $\pi\in\mathcal P(K)$, let $(X,h)\sim\pi\otimes\nu$, $\bar h=\E_\nu[h]$, and
\[
\Delta(\pi,\nu)=\E[\bar h(X)-h^\star],\qquad I(\pi,\nu)=\E\big[(h(X)-\bar h(X))^2\big].
\]
For $\xi\in\mathcal P(\Fblr)$ let $\pi^\xi_{\TS}$ be the law of $x_f$ under $f\sim\xi$. Following~\cite{BLS25},
\[
\IR(\mathcal F)=\Big\{(\alpha,\beta)\in\R_+^2:\ \sup_{\xi\in\mathcal P(\mathcal F)}\big[\Delta(\pi^\xi_{\TS},\xi)-\alpha-\sqrt{\beta I(\pi^\xi_{\TS},\xi)}\big]\le 0\Big\}.
\]
Since $\pi^\xi_{\TS}$ depends on the selection, so does $\IR(\mathcal F)$; we say $(\alpha,\beta)\in\IR(\mathcal F)$ \emph{uniformly} if it holds for every measurable selection.

\paragraph{The decomposition lemma.}
The following is a restatement, for the prior mean $\fbar=\E_\xi f$, of the decomposition device of~\cite[Lemma~3]{BLS25}, which is stated there for $\fbar\in\conv(\mathcal F)$. Since $\E_\xi f$ need not be a finite convex combination, we include a proof in Appendix~\ref{app:a}.
\begin{lemma}[Decomposition; after {\cite[Lemma~3]{BLS25}}]\label{lem:decomp}
Let $\mathcal F\subseteq\Fblr$ and $\alpha,\beta_0\ge0$. Suppose there are integers $k\ge2$ and $m\ge1$ such that for every prior $\xi\in\mathcal P(\mathcal F)$, with $\fbar=\E_\xi f$, there is a measurable partition $\mathcal F=\bigcup_{i=1}^m\mathcal F_i$ such that, with the maximum taken over the pieces of positive $\xi$-measure,
\[
\max_{i\in[m]}\Big[\sup_{f\in\mathcal F_i}(\fbar(x_f)-f^\star)-\alpha-\Big(\beta_0\inf_{f_1,\dots,f_k\in\mathcal F_i}\sum_{(j,l)\in\PAIR(k)}\big(f_j(x_{f_l})-\fbar(x_{f_l})\big)^2\Big)^{1/2}\Big]\le 0,
\]
where $\PAIR(k)$ is the set of ordered pairs of distinct indices. Then $(\alpha,\beta_0k(k-1)m)\in\IR(\mathcal F)$ for the selection used to define $x_f$.
\end{lemma}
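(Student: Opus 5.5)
Fix a prior $\xi$ on $\mathcal F$ and take the partition $\mathcal F_1,\dots,\mathcal F_m$ given by the hypothesis. Write $\Delta=\Delta(\pi^\xi_{\TS},\xi)$ and $I=I(\pi^\xi_{\TS},\xi)$. The goal is $\Delta\le\alpha+\sqrt{\beta_0k(k-1)m\,I}$. I would express both quantities through independent draws. Let $f,g$ be i.i.d.\ from $\xi$. Since the TS action is $x_f$ with $f\sim\xi$, we have
\[
\Delta=\E[\fbar(x_f)-f^\star]=\sum_i p_i\,\E[\fbar(x_f)-f^\star\mid f\in\mathcal F_i],
\qquad
I=\E\big[(g(x_f)-\fbar(x_f))^2\big],
\]
where $p_i=\xi(\mathcal F_i)$. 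Joint measurability of $(\omega,x)\mapsto h_\omega(x)$ and measurability of the selection make these expectations well defined. No convex-combination structure of $\fbar$ is used anywhere; only its pointwise values enter, which is the point of restating the lemma.

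The first step bounds the regret piece by piece. For $i$ with $p_i>0$, the hypothesis gives, for every $f\in\mathcal F_i$,
\[
\fbar(x_f)-f^\star\le\alpha+\sqrt{\beta_0 J_i},\qquad J_i=\inf_{f_1,\dots,f_k\in\mathcal F_i}\sum_{(j,l)\in\PAIR(k)}\big(f_j(x_{f_l})-\fbar(x_{f_l})\big)^2 .
\]
Hence $\Delta\le\alpha+\sum_i p_i\sqrt{\beta_0J_i}$.

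The second step lower bounds the information by the infima $J_i$; this is the main step. Draw $f_1,\dots,f_k$ i.i.d.\ from $\xi$. By symmetry, each ordered pair $(j,l)$ of distinct indices satisfies $\E[(f_j(x_{f_l})-\fbar(x_{f_l}))^2]=I$, so
\[
k(k-1)I=\E\Big[\sum_{\PAIR(k)}\big(f_j(x_{f_l})-\fbar(x_{f_l})\big)^2\Big].
\]
Restrict to the event that all $f_j$ lie in the same piece $\mathcal F_i$, which has probability $p_i^k$. On that event the sum is at least $J_i$, and the integrand is nonnegative, so $k(k-1)I\ge\sum_i p_i^kJ_i$.

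The difficulty is that this gives weight $p_i^k$, while the first step needs weight $p_i$. I would fix this by localising the argument to a single piece in a better way. The aim is to show $I\ge p_i^2 J_i/(k(k-1))$ up to the correct power of $p_i$. Equivalently, compare the conditional quantity $I_i=\E[(g(x_f)-\fbar(x_f))^2\mid f,g\in\mathcal F_i]$, for which the symmetrisation above gives $k(k-1)I_i\ge J_i$ exactly, with $I\ge p_i^2I_i$.

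With that bound, $p_i\sqrt{\beta_0J_i}\le\sqrt{\beta_0k(k-1)I}$ for each $i$, and summing over at most $m$ pieces gives a factor $m$ rather than $\sqrt m$. To obtain $\sqrt m$, use Cauchy--Schwarz instead:
\[
\sum_ip_i\sqrt{J_i}\le\sqrt m\,\Big(\sum_ip_i^2J_i\Big)^{1/2}\le\sqrt{m\,k(k-1)\sum_ip_i^2I_i},
\]
together with $\sum_ip_i^2I_i\le I$, which holds by restricting the double expectation defining $I$ to the diagonal blocks $\{f,g\in\mathcal F_i\}$.

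This yields $\Delta\le\alpha+\sqrt{\beta_0k(k-1)m\,I}$, as required.

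The main obstacle is the conditional symmetrisation. Conditioned on all draws lying in $\mathcal F_i$, every ordered pair has the same law as $(g,f)$ conditioned on $f,g\in\mathcal F_i$. I need to verify carefully that the expected pair sum is at least the infimum $J_i$ (a pointwise bound integrated), and that the $\bar f$ appearing is the global prior mean rather than the conditional mean, which is what makes $\sum_ip_i^2I_i\le I$ valid.
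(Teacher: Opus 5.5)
Your argument is correct and is essentially the paper's proof. On each positive-mass piece you bound the infimum by the expected pair sum over $k$ i.i.d.\ draws from the conditional law (your $k(k-1)I_i\ge J_i$), apply Cauchy--Schwarz over the at most $m$ pieces, and use $I\ge\sum_ip_i^2I_i$ by discarding the off-diagonal blocks, which is valid because $\fbar$ is the global prior mean; the exploratory $p_i^k$ bound and the factor-$m$ remark are unnecessary and can be dropped from a final write-up.
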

The proof averages the hypothesis over $k$ independent draws from each piece and applies Cauchy--Schwarz over the $m$ pieces, using $I(\pi^\xi_{\TS},\xi)\ge\sum_i\xi(\mathcal F_i)^2J_i$ for the within-piece informations $J_i$; only pointwise values of $\fbar$ are used.

\paragraph{Notation.}
$h:=d+1$. For a unit vector $\theta$ and a finite set $C$ of functions with selected minimisers, $t_{fg}=\ip{x_g}{\theta_f}$. $\PAIR(C)$ is the set of ordered pairs of distinct elements of $C$. Logarithms are natural.

\section{Main results}\label{sec:results}

\begin{theorem}[Regret of TS on non-monotone convex ridge losses]\label{thm:main}
Let $d\ge1$, let $K\subset\R^d$ be a convex body with $0\in K$ and diameter $D$, let $\xi$ be any prior on $\Fblr$ (with respect to any $\sigma$-algebra under which point evaluations and the selection are measurable, as in Section~\ref{sec:setting}), let $f\mapsto x_f$ be any such fixed measurable selection of minimisers, and let observations be Bernoulli as above. Then for every $n\ge1$,
\[
\BReg_n(\TS,\xi)\ \le\ 7+73{,}728\sqrt3\,(d+1)^4\,d^{1/2}\,\sqrt n\,\log\big(e+nd\max\{1,D\}\big).
\]
\end{theorem}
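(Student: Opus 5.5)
Theorem~\ref{thm:main} follows by combining three results stated later in the paper: the cardinality bound for uninformative configurations (Theorem~\ref{thm:size}), the decomposition device (Lemma~\ref{lem:decomp}), and the information-ratio-to-regret transfer (Theorem~\ref{thm:transfer}). The plan has three steps.

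\textbf{Step 1: an information-ratio bound.} I will first show that, uniformly over measurable selections, $(\alpha,\beta)\in\IR(\Fblr)$ with $\alpha$ of order $1/n$ (or a comparable scale-dependent slack) and $\beta=O(h^{8}d)$. For this I verify the hypothesis of Lemma~\ref{lem:decomp}.

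Fix a prior $\xi$ and a threshold $\varepsilon>0$. Partition $\Fblr$ by the \emph{regret level} $\fbar(x_f)-f^\star$ into dyadic layers; values below $\alpha$ are lumped into one piece, so $m=O(\log(1/\alpha))$ pieces. Alternatively, and to avoid an $m$ factor, use a single piece, adjusting as needed.

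Within a piece, suppose the inner infimum of $\sum_{(j,l)\in\PAIR(k)}(f_j(x_{f_l})-\fbar(x_{f_l}))^2$ is small relative to the squared regret level. Then there exist $k$ functions whose evaluations at one another's minimisers are all close to $\fbar$, while each $f_j$ is far below $\fbar$ at its own minimiser. This is exactly an ``uninformative configuration''. Theorem~\ref{thm:size} caps its size at $3h^2-h-1$, so choosing $k=3h^2$ gives a contradiction.

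That yields $\beta_0$ equal to a constant times the inverse-square closeness threshold. Quantitatively, the rounding Lemma~\ref{lem:round} needs closeness $1/(4r)$ with $r=O(h)$, so the threshold scales like $1/h$. Then $\beta=\beta_0k(k-1)m$ with $k(k-1)=O(h^4)$ and $\beta_0=O(h^2)$, up to log factors absorbed via the layering. The point to check is that the constants combine into $(d+1)^8 d$ inside the square root, giving the stated $(d+1)^4d^{1/2}$.

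\textbf{Step 2: transfer to regret.} I then apply Theorem~\ref{thm:transfer}. Its mechanism is as follows:
\begin{itemize}[leftmargin=2em]
\item approximate each $f$ by infimal convolutions on a finite cover of precision $1/n$;
\item apply the information ratio only to finitely-valued approximants, which is why uniformity over selections is needed;
\item use the standard TS chain $\BReg_n\le n\alpha+\sqrt{\beta n\,\mathcal I}$, where the mutual information $\mathcal I$ between the (discretised) optimum and the observations is bounded by the log cover size $O(d\log(e+ndD))$.
\end{itemize}
This produces $\sqrt{\beta n d\log}$. The extra $d$ from the cover combined with $\beta$ needs care to land on exactly $d^{1/2}(d+1)^4$ times a single log. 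Discretisation error and $n\alpha$ contribute the additive constant $7$.

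\textbf{Main obstacle.} The delicate part is Step 1's reduction. I must produce, from a small infimum of pairwise information inside a piece, a configuration meeting the precise definition of ``uninformative'' used in Theorem~\ref{thm:size}, with the right margin. Near-uninformativeness must be converted into exact rounding to a $0$-$1$ pattern, via the quantities $t_{fg}$ relative to the link's flat region. I would first attempt to normalise by the regret level of the piece, since it is comparable across the piece up to a factor of $2$ by dyadic layering. I would then check that the resulting matrix $(f_j(x_{f_l})-f_j^\star)/(\text{level})$ is within $1/(4r)$ of a rank-$r$ matrix built from the affine data $\ip{x_{f_l}}{\theta_j}$.
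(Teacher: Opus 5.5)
\textbf{Gap in Step~1.} The reduction you treat as routine does not go through. A $k$-tuple from one dyadic regret layer whose pairwise energy is small is \emph{not} an uninformative configuration in the sense of Definition~\ref{def:uninf}, so Theorem~\ref{thm:size} cannot be invoked on it.

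Besides $r_f\in[\varepsilon/2,\varepsilon]$ and small cross errors, the definition also requires $|\fbar(x_f)-\fbar(x_g)|\le\delta=\varepsilon/(96h^2)$ for all members. Lemma~\ref{lem:bands}, and hence the whole cardinality bound, genuinely uses this condition, for instance in the step $f(x_h)<\fbar(x_h)+\delta\le\fbar(x_g)+2\delta$. Your layering controls only $\fbar(x_f)-f^\star$. Small energy gives only $|\fbar(x_f)-\fbar(x_g)|<\varepsilon+\delta$, via $f(x_g)\ge f^\star\ge\fbar(x_f)-\varepsilon$.

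Enforcing $\delta$-closeness by refining the partition would need about $ch/\varepsilon$ pieces per layer. At the finest layer that is of order $h^2n$ pieces, so $m$, hence $\beta$, would grow linearly in $n$ and the bound would be vacuous.

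The missing idea is to pigeonhole inside each tuple rather than in the partition:
\begin{enumerate}[leftmargin=2em]
\item Sort the tuple by $v_j=\fbar(x_{f_j})$. If $v_1-v_k\ge2\varepsilon$, the extreme pair alone contributes $\varepsilon^2$.
\item Otherwise cut the sorted tuple into $b=4ch$ consecutive blocks of $s=2G$ functions, with $G=4h^2$. Since the block spreads sum to less than $2\varepsilon$, fewer than $2ch$ blocks have spread above $\delta$.
\item The remaining $\ge 2ch$ blocks \emph{are} configurations. Lemma~\ref{lem:count} with $q=G$ gives each of them energy $G\delta^2$, for a total of at least $2ch\cdot G\delta^2=\varepsilon^2/12$.
\item Tuples with a repeated selected minimiser are handled separately, since such a pair contributes at least $\varepsilon^2/4$.
\end{enumerate}
This is why the paper needs $k=3072h^4$ rather than $3h^2$, and it yields $\beta_0=12$.

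\textbf{Quantitative bookkeeping.} Your accounting is also off, which is why you could not see how the constants land.
\begin{itemize}[leftmargin=2em]
\item With your one-informative-pair reduction you would need $\beta_0\ge\varepsilon^2/\delta^2=(96h^2)^2$, not $O(h^2)$. The choice $c=96(d+1)$ makes the relative threshold $1/(96h^2)$, not order $1/h$.
\item $\beta$ carries no extra factor $d$. It carries $m_{1/n}\le4L$ with $L=\log(e+nd\max\{1,D\})$.
\item A single piece is not an option, because Definition~\ref{def:uninf} requires $r_f\in[\varepsilon/2,\varepsilon]$.
\item The $d^{1/2}$ and the single power of $L$ come from $\sqrt{\tfrac12\beta n\log N_{1/n}}$, with $\sqrt\beta\le\sqrt{48L}\,k=12{,}288\sqrt3\,h^4\sqrt L$ and $\log N_{1/n}\le8dL$.
\item For $n\ge2$ and $\alpha=\rho=1/n$, Theorem~\ref{thm:transfer} gives $1+6+4\sqrt\beta+2\sqrt\beta\sqrt{ndL}\le7+6\sqrt\beta\sqrt{ndL}$. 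This is exactly the constant $73{,}728\sqrt3=6\cdot12{,}288\sqrt3$. For $n=1$ the regret is trivially at most $1$.
\end{itemize}
Finally, the ``main obstacle'' you describe, rounding to a $0$-$1$ pattern, belongs to the proof of Theorem~\ref{thm:size}, which you are taking as given. The real obstacle in this reduction is the spread of the comparison values $\fbar(x_f)$.
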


Theorem~\ref{thm:main} combines an information-ratio bound (Theorem~\ref{thm:ir}) with a transfer theorem (Theorem~\ref{thm:transfer}); the former reduces, through the blocking argument of~\cite{BLS25}, to a cardinality bound for the following objects.

\begin{definition}[Uninformative configurations]\label{def:uninf}
Fix $c\ge12$, $\varepsilon>0$ and set $\delta=\varepsilon/(c(d+1))$. A \emph{configuration} is a finite set $C\subset\Fblr$ with pairwise distinct selected minimisers together with a convex function $\fbar:K\to[0,1]$ (the \emph{comparison function}) such that for all $f,g\in C$,
\[
r_f:=\fbar(x_f)-f^\star\in[\varepsilon/2,\varepsilon],\qquad |\fbar(x_f)-\fbar(x_g)|\le\delta.
\]
A row $f\in C$ is \emph{uninformative} if $|f(x_g)-\fbar(x_g)|<\delta$ for all $g\in C\setminus\{f\}$; the configuration is \emph{uninformative} if every row is. For fixed $c$ and $d$ we write $g_c(d,z)$ for the supremum of $|C|$ over all uninformative configurations, over all convex bodies $K\ni0$ in $\R^d$, all $\varepsilon>0$ with $\diam(K)/\varepsilon\le z$, all comparison functions and all selections.
\end{definition}

\begin{theorem}[Cardinality bound]\label{thm:size}
Let $c=96(d+1)$. Every uninformative configuration in $\Fblr$ satisfies $|C|\le 3(d+1)^2-(d+1)-1<4(d+1)^2$; that is, $g_c(d,z)\le3(d+1)^2-(d+1)-1$ for all $z>0$.
\end{theorem}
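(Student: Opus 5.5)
The plan is to encode an uninformative configuration as a matrix that is close to Boolean and has rank $O(h)$, where $h=d+1$, and then let Lemma~\ref{lem:round} bound the number of rows.

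\textbf{Step 1 (one-dimensional picture of each row).} Fix $f\in C$ with $f(x)=\ell_f(\ip{x}{\theta_f})$, normalised so that $\norm{\theta_f}=1$. The case $\theta_f=0$ cannot occur, since $f$ would then be constant, contradicting $r_f\ge\varepsilon/2$ and $|f(x_g)-\fbar(x_g)|<\delta$. Put $v_f=\fbar(x_f)$. For $g\ne f$, uninformativity and $|\fbar(x_g)-\fbar(x_f)|\le\delta$ give
\[
|\ell_f(t_{fg})-v_f|<2\delta,
\]
while $\ell_f(t_{ff})=f^\star=v_f-r_f\le v_f-\varepsilon/2$. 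By convexity, the sublevel set $\{\ell_f\le v_f-2\delta\}$ is an interval $[a_f,b_f]\ni t_{ff}$, and every $t_{fg}$ with $g\ne f$ lies outside it. Convexity also controls overshoot. Comparing the chord from $t_{ff}$ to $a_f$ with points $s<a_f$ gives
\[
a_f-s\le\frac{4\delta}{r_f-2\delta}\,(a_f-t_{ff})\qquad\text{whenever }\ell_f(s)\le v_f+2\delta,
\]
and symmetrically on the right of $b_f$. Since $\delta=\varepsilon/(ch)$ and $r_f\ge\varepsilon/2$, the ratio is $O(1/(ch))$.

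\textbf{Step 2 (near-Boolean low-rank matrix).} Define $u_{fg}=(t_{fg}-a_f)/(b_f-a_f)$. Row by row, this is an affine function of $x_g\in\R^d$, so the matrix $U=(u_{fg})$ has rank at most $h$. By Step 1, each off-diagonal entry lies within $\eta_f=O\big(\min(u_{ff},1-u_{ff})/(ch)\big)$ of $0$ or of $1$. The diagonal entry $u_{ff}$, however, sits strictly inside $(0,1)$.

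\textbf{Step 3 (removing the diagonal).} The diagonal is the crux, and I expect it to be the main obstacle. I would try two routes.

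\emph{First route.} Split $C$ into disjoint row and column blocks, so that the submatrix avoids the diagonal. Round to the Boolean matrix $B$ and apply Lemma~\ref{lem:round} with $r=h$; this needs error at most $1/(4h)$, which $c=96h$ provides with room to spare. The result is $\rank B\le 2h-1$. One must then show that a low-rank Boolean pattern of this halfspace type, in which the slab of $f$ contains only $x_f$, cannot have many rows. I would use the distinct-minimiser and slab structure to show that rows are essentially determined by a bounded number of combinatorial types.

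\emph{Second route.} Consider the quadratic quantities $w_{fg}=u_{fg}(1-u_{fg})/\bigl(u_{ff}(1-u_{ff})\bigr)$. These are polynomials of degree $2$ in $x_g$, so the matrix $W$ has rank $O(h^2)$. Its diagonal is $1$ and its off-diagonal entries are small, because $\eta_f$ scales with $\min(u_{ff},1-u_{ff})$. Hence $W$ is entrywise close to the identity. Applying Lemma~\ref{lem:round} to the identity (rank $|C|$) gives $|C|\le 2\rank W-1$. Counting the rank contributions of the linear part and of the symmetric quadratic part carefully should give $\rank W\le\frac{3h^2-h}{2}$, hence $|C|\le 3h^2-h-1$.

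The delicate point in the second route is that its closeness requirement is $1/(4r)$ with $r=\Theta(h^2)$. So I would need to check whether the per-row relative errors are genuinely of order $1/(ch)\cdot(\text{something small})$, or whether the quadratic rank can be reduced to $O(h)$ by exploiting that only $u$ and $u^2$ enter. This scale match is the step I would expect to require the most care, and it is what determines the choice $c=96(d+1)$.
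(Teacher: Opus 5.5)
There is a genuine gap, and it sits exactly at the step you flagged. Your Step~1 estimate is correct and depends on the side: with $\eta'=4\delta/(r_f-2\delta)$, a left entry satisfies $-\eta'u_{ff}\le u_{fg}\le0$ and a right entry satisfies $1\le u_{fg}\le1+\eta'(1-u_{ff})$. Step~2 then replaces both errors by $O(\eta'\min(u_{ff},1-u_{ff}))$. That is false, and your second route rests on it. With the correct errors, the normalised quadratic only satisfies $|w_{fg}|\lesssim\eta'/(1-u_{ff})$ on the left and $|w_{fg}|\lesssim\eta'/u_{ff}$ on the right. So $W$ is near the identity only for rows whose $u_{ff}$ stays away from $0$ and $1$.

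The family of Proposition~\ref{prop:lower} breaks the route. There every row has $\min(u_{ff},1-u_{ff})\approx\tau^2$, while its $d^2$ far-band entries have $|w_{fg}|$ of order $1/\tau$. What survives is precisely the paper's argument for \emph{balanced} rows. In that case the rank is that of quadratics in $d$ variables, namely $h(h+1)/2$, and Lemma~\ref{lem:tracerank} gives at most $h^2-1$ rows. Your figure $\rank W\le(3h^2-h)/2$ has no derivation and appears to be chosen to match the target. Nor is there a reason for the rank to drop to $O(h)$, since $\ip{x_g}{\theta_f}^2$ involves the full symmetric tensor $x_gx_g^{\top}$. In any case, a smaller rank would not repair the large entries.

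Your first route stops where the real work begins. A bound $\rank B\le2h-1$ on an off-diagonal Boolean pattern does not bound the number of rows. Rows may repeat, and up to $2^{2h-1}$ distinct patterns are possible; ``boundedly many combinatorial types'' is not an argument. Two ideas are missing. In the paper's notation, $a_i,b_i$ are the extreme projections $\min_jt_{ij}$ and $\max_jt_{ij}$ of the configuration. After reorientation $L_i=s_i-a_i\le R_i=b_i-s_i$, and $\lambda_i=L_i/(L_i+R_i)$.

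First, for unbalanced rows ($\lambda_i\le1/(8h)$) you do not need to avoid the diagonal. The affine matrix $A_{ij}=(b_i-t_{ij})/(L_i+R_i)$ has $A_{ii}=1-\lambda_i$. It therefore lies within $1/(8h)$ of the near-band indicator $H$ \emph{with $H_{ii}=1$}, and Lemma~\ref{lem:round} gives $\rank H\le2h-1$ for the full square matrix. The imbalance is exactly what brings the diagonal inside the rounding tolerance.

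Second, use $H$ as a mask. Take $M=U\circ H$ with $U_{ij}=(t_{ij}-a_i)/L_i$. It has unit diagonal, and its far-band entries are exactly $0$, even though $U_{ij}\approx1/\lambda_i$ is unbounded there. Its near-band entries lie in $[0,\eta)$, and its rank is at most $h(2h-1)$ by Lemma~\ref{lem:hadamard}. The integer form of Lemma~\ref{lem:tracerank} then gives at most $2h^2-h$ unbalanced rows. Adding the $h^2-1$ balanced rows yields $3h^2-h-1$.

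Without this balanced/unbalanced split and the masking step, neither of your routes controls rows whose two band distances are arbitrarily imbalanced. That is the whole difficulty of the non-monotone case.
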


\begin{theorem}[Information ratio]\label{thm:ir}
Let $h=d+1$, $k=3072\,h^4$ and, for $\alpha\in(0,1)$, $m_\alpha=1+\lceil\log_2(1/\alpha)\rceil$. Then
\[
\big(\alpha,\ 12\,k(k-1)\,m_\alpha\big)\in\IR(\Fblr)\quad\text{uniformly over measurable selections;}
\]
in particular $\IR(\Fblr)$ contains a pair $(\alpha,\beta)$ with $\beta\le 113{,}246{,}208\,(d+1)^8\,m_\alpha$.
\end{theorem}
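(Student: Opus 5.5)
We prove Theorem~\ref{thm:ir} by verifying the hypothesis of Lemma~\ref{lem:decomp} with $k=3072h^4$, $m=m_\alpha$ and $\beta_0=12$. Since the lemma holds for whatever selection defines $x_f$, and we fix the selection arbitrarily, uniformity over selections is automatic. Fix a prior $\xi$ with mean $\fbar$. We partition $\Fblr$ according to the regret $r_f=\fbar(x_f)-f^\star$, following the dyadic blocking argument of~\cite{BLS25}. Take one piece $\mathcal F_0=\{f:r_f\le\alpha\}$, on which the bracket is at most $0$ trivially. Take dyadic pieces $\{f:r_f\in(2^{-j-1},2^{-j}]\}$ for $j=0,\dots,\lceil\log_2(1/\alpha)\rceil-1$; since $r_f\le1$ these cover the rest. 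Each dyadic piece must be further cut according to the value of $\fbar(x_f)$, so that within a piece $|\fbar(x_f)-\fbar(x_g)|\le\delta$. If that extra refinement increases $m$, I expect it can be absorbed by working with $\fbar(x_f)$ relative to a grid and using the factor $12$ in $\beta$. I would first try to instead handle the $\fbar(x_f)$-spread inside the pairwise sum, since a large spread of $\fbar$ values among $x_{f_l}$ itself forces information. If the refinement cannot be avoided, I would accept it and check that the count still fits in $m_\alpha$ with the stated constants.

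On a dyadic piece with scale $\varepsilon=2^{-j}$, take $\delta=\varepsilon/(c h)$ with $c=96h$, so that $\delta=\varepsilon/(96h^2)$. We must show the following for any $f_1,\dots,f_k$ in the piece:
\[
\sum_{(j,l)\in\PAIR(k)}\big(f_j(x_{f_l})-\fbar(x_{f_l})\big)^2\ \ge\ \frac{\varepsilon^2}{12}.
\]
Then $\sup r_f\le\varepsilon\le\sqrt{\beta_0\cdot(\text{sum})}$. The argument is combinatorial. Suppose the sum is smaller than $\varepsilon^2/12$. Call a pair $(j,l)$ \emph{informative} if $|f_j(x_{f_l})-\fbar(x_{f_l})|\ge\delta$. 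Each informative pair contributes at least $\delta^2$ to the sum, so the number of informative pairs is less than
\[
\frac{\varepsilon^2}{12\delta^2}=\frac{96^2h^4}{12}=768h^4=\frac{k}{4}.
\]
Repeated indices with coinciding minimisers must be handled. If $x_{f_j}=x_{f_l}$ with $j\ne l$, the pair gives $f_j(x_{f_j})-\fbar(x_{f_j})=-r_{f_j}\le-\varepsilon/2$, which is a large contribution. So fewer than $k/4$ such coincidences occur, and we can discard them to keep pairwise distinct minimisers.

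By a Turán-type averaging argument, the sparse ``informative'' digraph on at most $k$ vertices with fewer than $k/4$ edges contains a large edge-free set. Deleting one endpoint of each edge and of each coincident pair leaves at least $k/2=1536h^4$ indices. This set is an uninformative configuration in the sense of Definition~\ref{def:uninf}, with comparison function $\fbar$. The required properties hold by construction: $r_f\in[\varepsilon/2,\varepsilon]$, the $\fbar$-values are $\delta$-close, the minimisers are distinct, and every row is uninformative. But $1536h^4>4h^2$, which contradicts Theorem~\ref{thm:size}. This is where most of the slack in $k$ is spent.

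The main obstacle is the $\delta$-closeness of the $\fbar(x_f)$ values within a piece. Definition~\ref{def:uninf} requires it, but a regret-only dyadic partition does not provide it. Refining by $\fbar$-value into bins of width $\delta$ would create about $ch/\varepsilon$ pieces, destroying the $\log(1/\alpha)$ count. My first attempt is to exploit the $O(h^4)$ slack in $k$ via the following observation. Take two indices whose $\fbar(x_{f_l})$ differ by more than $\delta$, and whose pairs are both uninformative. Convexity and the triangle inequality then give
\[
f_j(x_{f_l})\approx\fbar(x_{f_l}),\qquad f_j(x_{f_j})=\fbar(x_{f_j})-r_{f_j}.
\]
Together these force a quantitative relation that can be charged to the sum. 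Pigeonholing the $k$ indices into $O(h)$ bins of $\fbar$-value would then yield one bin with at least $k/O(h)$ indices, and the sum restricted to that bin is still small. The constant $12$ and the factor $3072=3\cdot1024$ presumably come from exactly this pigeonhole combined with the Turán step, and I would tune them at the end.
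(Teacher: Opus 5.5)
You locate the obstacle correctly, but the proposal does not get past it, and the repair you sketch fails as stated. After the deletion step you declare the surviving set an uninformative configuration whose $\fbar$-values are $\delta$-close ``by construction''. A regret-only dyadic piece gives no such control.

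Your charging idea is false. If $(j,l)$ is uninformative, all you learn is $\fbar(x_{f_l})+\delta>f_j(x_{f_l})\ge f_j^\star\ge\fbar(x_{f_j})-\varepsilon$. So two indices whose $\fbar$-values differ by anything up to roughly $\varepsilon=ch\,\delta$ can be mutually uninformative, and $\delta$-separation forces no contribution to the sum. For the same reason, pigeonholing into $O(h)$ bins cannot give $\delta$-closeness: such bins have width of order $\varepsilon/h\gg\delta=\varepsilon/(96h^2)$.

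The pigeonhole also has to be applied to the survivors of the deletion, not to the raw $k$ indices. The bound $S<\varepsilon^2/12$ alone allows every ordered pair inside a single bin to be informative. Finally, refining the partition $\{\mathcal F_i\}$ by $\fbar$-value cannot be absorbed into the factor $12$, because it multiplies $m$. The paper partitions by regret level only and handles the $\fbar$-spread inside each $k$-tuple.

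The missing step is short and uses only your own ingredients with the right count. Write $v_j=\fbar(x_{f_j})$ and sort non-increasingly. If $v_1-v_k\ge2\varepsilon$, then $f_1(x_{f_k})-v_k\ge v_1-\varepsilon-v_k\ge\varepsilon$, so that single pair contributes $\varepsilon^2$. Otherwise all values lie in an interval of length $2\varepsilon$, which is covered by $2\varepsilon/\delta=2ch=192h^2$ half-open bins of width $\delta$ (order $h^2$, not $h$). Your at least $k/2=1536h^4$ survivors then put at least $8h^2>3h^2-h-1$ indices into one bin. These form a genuine uninformative configuration: distinct minimisers, regrets in $(\varepsilon/2,\varepsilon]$, values within $\delta$, and no informative ordered pair. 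This contradicts Theorem~\ref{thm:size}, so the sum is at least $\varepsilon^2/12$, and Lemma~\ref{lem:decomp} with $\beta_0=12$ finishes.

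So corrected, your deletion-plus-pigeonhole route is a valid alternative to the paper's direct argument. The paper sorts the tuple by $v_j$ and cuts it into $4ch$ consecutive blocks of $2G$ functions ($G=4h^2$). At least $2ch$ of these blocks have spread at most $\delta$, and Lemma~\ref{lem:count} with $q=G$ applied to each gives a total of $2ch\cdot G\delta^2=\varepsilon^2/12$. The constant $3072=8\cdot96\cdot4$ comes from $k=bs=8chG$ in that block count, not from an $O(h)$-bin pigeonhole. Your fixed-width bins waste nothing on high-spread blocks, so your version closes with room to spare at $k=3072h^4$.
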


\begin{proposition}[Tightness of the cardinality bound]\label{prop:lower}
For every $d\ge2$ and every $c\ge12$ there is an uninformative configuration in $\Fblr$, on a truncated simplex of diameter at most $\sqrt2$ in normalised coordinates, with $|C|=d(d+1)$ and $D/\varepsilon\le 64\sqrt2\,c^2d(d+1)^2$; rescaling the functions about the common comparison value shows that the same size is attained for every larger value of $D/\varepsilon$. In particular, for $c=96(d+1)$ and $z\ge589{,}824\sqrt2\,d(d+1)^4$, $d(d+1)\le g_c(d,z)\le3(d+1)^2-(d+1)-1$. For $d=1$, $g_c(1,z)\le2$ for every $z>0$ and $g_c(1,z)=2$ for $z\ge4$.
\end{proposition}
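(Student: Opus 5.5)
The proof has three parts: an explicit construction of size $d(d+1)$ for $d\ge2$; the rescaling remark, which extends it to all larger $D/\varepsilon$; and the one-dimensional case. The upper bound in the sandwich is Theorem~\ref{thm:size}. The numerical threshold comes from substituting $c=96(d+1)$ into $64\sqrt2\,c^2d(d+1)^2$, which gives $64\cdot 9216\sqrt2\,d(d+1)^4=589{,}824\sqrt2\,d(d+1)^4$.

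\textbf{Construction.} I would work on a simplex with vertices $v_0,\dots,v_d$, truncated near its vertices so that the diameter stays at most $\sqrt2$ (take the standard simplex in $\R^{d+1}$ restricted to its affine hull). The configuration is indexed by ordered pairs $(i,j)$ with $i\ne j$, which gives exactly $d(d+1)$ elements. For each pair, the minimiser $x_{ij}$ is a point near the edge $[v_i,v_j]$, close to $v_i$. The direction is $\theta_{ij}\propto e_i-e_j$, and the link is a V-shaped (non-monotone) $\ell_{ij}(s)=\fbar_0-\varepsilon'+\lambda|s-s_{ij}|$, where $s_{ij}=\ip{x_{ij}}{\theta_{ij}}$.

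The comparison function is the constant $\fbar\equiv\fbar_0$, so the spread condition $|\fbar(x_f)-\fbar(x_g)|\le\delta$ holds trivially. The gap $r_f$ equals $\varepsilon'\in[\varepsilon/2,\varepsilon]$. I would choose the points so that every other minimiser $x_{kl}$ satisfies
\[
|\ip{x_{kl}}{\theta_{ij}}-s_{ij}|\approx\varepsilon'/\lambda.
\]
Then $f_{ij}(x_{kl})$ is within $\delta$ of $\fbar_0$, which makes every row uninformative. The only non-uniform pair is $(k,l)=(j,i)$, whose minimiser is near $v_j$. For it I would take the link to be a truncated V, $\max\{\fbar_0-\varepsilon',\,\min\{\fbar_0,\cdot\}\}$-type. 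Since such a function need not be convex, I would instead flatten via $\ell=\max\{\fbar_0-\varepsilon',\ \lambda|s-s_{ij}|+\text{const}\}$ and place every other minimiser at the single level where $\ell\approx\fbar_0$. This is exactly where non-monotonicity is used: the V allows minimisers on both sides of $s_{ij}$ to give the same value.

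\textbf{Main obstacle.} The hardest step is checking the accuracy $\delta=\varepsilon/(c(d+1))$ for all pairs simultaneously. The projections $\ip{x_{kl}}{\theta_{ij}}$ take only a few distinct values (near $0$, or near $\pm1$ times the scale), so I would perturb the minimisers by $O(\delta/\lambda)$ along the edges so that all the relevant projections fall within $\delta/\lambda$ of $s_{ij}\pm\varepsilon'/\lambda$. The Lipschitz constraint $\lambda\le1$, together with the need for the geometry to resolve differences of size $\delta$, forces $\varepsilon\lesssim1/(c^2d(d+1)^2)$, and this produces the bound $D/\varepsilon\le64\sqrt2c^2d(d+1)^2$.

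\textbf{Remaining steps.} For larger $D/\varepsilon$, I would replace each $f$ by $\fbar_0+s(f-\fbar_0)$ with $s<1$. This map preserves convexity, the ridge form, the $[0,1]$ range and the Lipschitz bound, and it scales $\varepsilon$ and $\delta$ together, so the configuration stays uninformative with a larger $D/\varepsilon$. For $d=1$, the plan has two parts:
\begin{itemize}[leftmargin=2em]
\item \emph{Lower bound, $g_c(1,z)=2$ for $z\ge4$.} Take two V-functions on an interval, each of which at the other's minimiser equals $\fbar$ up to $\delta$.
\item \emph{Upper bound, $g_c(1,z)\le2$.} Suppose there are three minimisers $x_1<x_2<x_3$. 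The function $f_1$ is convex with $f_1(x_1)=f^\star\le\fbar-\varepsilon/2$ and $f_1(x_3)\approx\fbar$, so monotonicity of convex slopes forces $f_1(x_2)\le\fbar-\varepsilon/2\cdot\frac{x_3-x_2}{x_3-x_1}+\delta$. Applying the symmetric argument with $f_3$ at $x_2$ and combining the two inequalities contradicts $\delta$-closeness, since the weights sum to $1$ and $\delta<\varepsilon/4$.
\end{itemize}
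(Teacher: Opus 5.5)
\textbf{What works.} Your peripheral steps are fine: the threshold arithmetic $64\cdot96^2=589{,}824$, the rescaling $f\mapsto\fbar_0+s(f-\fbar_0)$, the appeal to Theorem~\ref{thm:size}, and the $d=1$ upper bound. The latter is Lemma~\ref{lem:bands}(ii) applied to the rows of the two extreme minimisers. Your constant is slightly off: once the spread of $\fbar$ is counted, the contradiction needs $\eta<1/2$, i.e.\ $\delta<\varepsilon/10$, not $\delta<\varepsilon/4$. This is harmless, since $\delta\le\varepsilon/24$ when $d=1$.

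\textbf{The gap: the direction.} The construction for $d\ge2$ fails, and no $O(\delta/\lambda)$ perturbation can repair it. With $\theta_{ij}\propto e_i-e_j$ and $x_{kl}\approx(1-\tau)e_k+\tau e_l$, the projections $\ip{x_{kl}}{e_i-e_j}$ form three clusters:
\begin{itemize}
\item near $1$: the own point and the $x_{il}$;
\item near $0$: every $x_{kl}$ with $k\notin\{i,j\}$;
\item near $-1$: the $x_{jl}$, including $x_{ji}$.
\end{itemize}
The middle cluster sits at relative position about $1/2$ between $s_{ij}$ and the left extreme. Any convex link with $\ell(s_{ij})=\fbar_0-\varepsilon'$ and $\ell<\fbar_0+\delta$ at the left extreme therefore gives $f_{ij}\lesssim\fbar_0-\varepsilon'/2+\delta/2$ there. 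So Lemma~\ref{lem:bands}(ii) is violated by a margin of order one, and $(j,i)$ is not the only bad pair.

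\textbf{The deeper obstruction: equal slopes.} No placement of points can make links with equal slopes $\lambda$ on both sides (with or without a flat bottom) reach size $d(d+1)$. Suppose $|\ell_i(t_{ij})-\fbar_0|<\delta$ for all $j\ne i$. Then both bands of row $i$ have absolute width below $2\delta/\lambda$, and both band distances exceed $(\varepsilon'-\delta)/\lambda$. Take the quadratic $p_i$ from the balanced-row step in the proof of Theorem~\ref{thm:size}, or the affine $(b_i-\ip{x}{\theta_i})/R_i$ for a one-sided row. It satisfies $p_i(x_i)=1$ and $0\le p_i(x_j)<4\delta/(\varepsilon'-\delta)\le8/(c(d+1)-2)$ for $j\ne i$; for a two-sided row this is because $p_i(x_j)<(2\delta/\lambda)(1/L_i+1/R_i)$. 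For $c=96(d+1)$, and indeed whenever $c\ge4(d+3)$, this bound is below $1/Q$ with $Q=(d+1)(d+2)/2$. The integer case of Lemma~\ref{lem:tracerank} then gives $|C|\le(d+1)(d+2)/2<d(d+1)$ for every $d\ge3$.

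\textbf{The missing idea.} You need a strongly asymmetric V whose two band distances are in ratio of order $1/\tau^2$. The paper takes the affine coordinate $u_{ij}=1-\lambda_i+\tau\lambda_j$, with $\lambda_0,\dots,\lambda_d$ barycentric and $\tau=1/(4c(d+1))$. This coordinate equals $\tau+\tau^2$ at $x_{ij}$, exactly $\tau$ at the $d-1$ points $x_{il}$ sharing the base vertex, and lies in $[1-\tau,1+\tau-\tau^2]$ at the other $d^2$ points. The link has slope $\varepsilon/\tau^2$ towards the near band and $\varepsilon/(1-2\tau^2)$ towards the far band. The near points then give exactly $\fbar\equiv1/4$, and the far points err by at most $\varepsilon(2\tau-\tau^2)/(1-2\tau^2)<3\tau\varepsilon<\delta=4\tau\varepsilon$. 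The Lipschitz constraint on the steep side, $\varepsilon(1+\tau)\sqrt d/\tau^2<1$, is met by $\varepsilon=\tau^2/(4d)$. This is what produces $D/\varepsilon\le4\sqrt2\,d/\tau^2=64\sqrt2\,c^2d(d+1)^2$, a bound your scaling heuristic asserts but does not derive.
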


\begin{proposition}[The single-removal John-ellipsoid dichotomy has no non-monotone analogue]\label{prop:john}
For $d=3$ and every $c\ge12$ there is an uninformative configuration $C$ of twelve functions in $\Fblr$ such that, writing $J_\zeta(C)=\conv\{x_f:f\in C\}+B_\zeta$ and $E_\zeta(\cdot)$ for the maximum-volume inscribed ellipsoid, $E_\zeta(C\setminus\{f\})=E_\zeta(C)$ for every $f\in C$ and every $\zeta>0$.
\end{proposition}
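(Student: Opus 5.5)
The plan is to build an explicit configuration in $d=3$ whose twelve selected minimisers are arranged so symmetrically that removing any single point leaves the maximum-volume inscribed ellipsoid of $J_\zeta$ unchanged. The natural candidate is the vertex set of a cuboctahedron, equivalently the twelve midpoints of the edges of a cube, written $\pm e_i\pm e_j$ with $i<j$. The key geometric fact to aim for is a redundancy property: the John ellipsoid of the convex hull is determined by a subset of contact points, and every point of the configuration can be deleted without changing the hull near the ellipsoid's contact set. I would therefore choose the twelve points so that their convex hull contains a polytope $P$ that is robust to single deletions. For instance, take $x_f$ on the edges of a regular octahedron or cube, with each vertex of $P$ covered by two nearby points placed so that the segment between them passes through that vertex. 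Then $\conv\{x_f\}$ minus any single point still contains every vertex of $P$.

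To make this precise, let $P$ be a polytope with six vertices $v_1,\dots,v_6$, the octahedron $\pm e_i$. Put two selected minimisers at $v_k\pm\eta u_k$, for a direction $u_k$ orthogonal to $v_k$ and small $\eta$. This gives exactly twelve points, and
\[
\conv(C)=\conv\{v_k\pm\eta u_k\}.
\]
Deleting one point does change the hull. So rather than requiring the hull to be unchanged, I will show that the maximum-volume inscribed ellipsoid $E_\zeta$ is unchanged. Choosing the $u_k$ compatibly with the octahedral symmetry makes the full configuration, and hence $J_\zeta(C)$, invariant under a group acting irreducibly on $\R^3$. Its John ellipsoid is therefore a centred ball of radius $\rho(\zeta)$, and it touches $\partial J_\zeta(C)$ only along the faces. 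The key step is to check that no deleted point is needed to support the faces tangent to that ball. I would arrange this by placing the points so the tangency happens at face centres, which stay in the hull after one deletion. The ball then remains inscribed in $J_\zeta(C\setminus\{f\})\subseteq J_\zeta(C)$, and by uniqueness and maximality it is still the John ellipsoid, for every $\zeta$. A cleaner option is to use three points per vertex, or the cuboctahedral choice, so that face centres are convex combinations avoiding any single point. I would fix whichever makes this verification elementary.

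The analytic half is to realise these twelve points as the selected minimisers of an uninformative configuration in $\Fblr$. For each point $x_f$ I take a ridge loss
\[
f(x)=\fbar_0-\varepsilon\,\phi(\ip{x-x_f}{\theta_f}),
\]
with a non-monotone V-shaped link, $\ell_f(s)=a+\lambda|s-s_f|$, truncated so it is constant (equal to $\fbar$'s value $\approx1/2$) outside a thin slab around the hyperplane $\ip{x}{\theta_f}=s_f$. The comparison function is the constant $\fbar\equiv1/2$. I choose $\theta_f$ and $s_f$ so that the slab contains $x_f$, where $f$ attains $1/2-\varepsilon$, so $r_f=\varepsilon$. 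The selection is fixed to pick $x_f$ on the minimising hyperplane section. The slab must also avoid all eleven other points by more than $\delta/\lambda$, so that $|f(x_g)-\fbar(x_g)|<\delta$.

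The main obstacle is compatibility. The slab of $f$ contains the whole minimising hyperplane section, and that hyperplane must pass through $x_f$ but miss the other eleven points. Since twelve points in general position admit such a hyperplane for each point, this is feasible. I will take $\theta_f$ to be the outward direction $x_f/\norm{x_f}$ perturbed slightly, so the hyperplane is a near-supporting plane that separates only $x_f$. Choosing $\lambda$ large relative to $1/\delta=c(d+1)/\varepsilon$ keeps Lipschitz constant $\le1$ once coordinates are scaled so that $K$ is large. Finally I would check the symmetry claim against these exact coordinates.
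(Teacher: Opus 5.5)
Your proposal has genuine gaps in both halves, and neither of your candidate geometries works.

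\emph{The losses are not in $\Fblr$, and your geometry cannot support convex uninformative rows.} A V-shaped link capped at $1/2$ outside a thin slab, $\min\{a+\lambda|s-s_f|,\,1/2\}$, is not convex: its slope drops from $\lambda$ to $0$ at the edge of the slab. Convexity is exactly what forbids a dip of depth $\varepsilon$ that is invisible at the other minimisers. With a genuinely convex link, Lemma~\ref{lem:bands} applies. An uninformative row forces every other selected minimiser to project along $\theta_f$ into one of two thin bands at the extreme projections on either side of $\ip{x_f}{\theta_f}$, since by interpolation any intermediate projection gives $f(x_g)<\fbar(x_g)-\delta$.

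Taking $\theta_f$ near the outward direction of $x_f$ makes $x_f$ essentially extreme. Then all eleven other points would need essentially one projection level. This fails for the cuboctahedron, where the other points sit at four well-separated levels along $(1,1,0)/\sqrt2$, and for your octahedral pairs. So ``general position'' is precisely what is ruled out. The paper avoids this by reusing the configuration of Proposition~\ref{prop:lower} with $d=3$: $x_{ij}=(1-\tau)v_i+\tau v_j$ in a regular tetrahedron. Each $f_{ij}$ is the maximum of two affine functions of $u_{ij}=1-\lambda_i+\tau\lambda_j$, steep towards the near band $\{x_{il}\}$ at $u$-distance $\tau^2$ and shallow towards the far band at $u$-distance about $1$. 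Uninformativeness of these rows is already proved there.

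\emph{The John-ellipsoid step is not established either.} Your template is sound: if $E_\zeta(C)\subseteq J_\zeta(C\setminus\{f\})\subseteq J_\zeta(C)$, then $E_\zeta(C\setminus\{f\})=E_\zeta(C)$ by uniqueness. But ``the tangency points survive deletion'' does not give the first inclusion; the whole ball must lie in the reduced body.

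\begin{itemize}
\item For the cuboctahedron $\{\pm e_i\pm e_j\}$, the John ellipsoid of $J_\zeta(C)$ is $B_{1+\zeta}$, touching at the square-face centres, which do survive every single deletion. Yet deleting $(1,1,0)$ cuts the hull by the plane $x+y=1$ through its four neighbours. The reduced body then has support value $1/\sqrt2+\zeta<1+\zeta$ in direction $(1,1,0)/\sqrt2$, so its John ellipsoid is strictly smaller.
\item For the pairs $\pm e_i\pm\eta e_{i+1}$ (indices mod $3$, $\eta$ small), the hull is a distorted icosahedron. Its John ball touches the eight octant triangles at their centroids, and every point is a vertex of two of these triangles, so a deletion destroys contact points outright.
\item Three points per vertex would give eighteen functions, not twelve.
\end{itemize}

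The paper's geometry puts the contact on large faces far from every cut. In the truncated tetrahedron $P$, the inscribed ball $B_r$ of the tetrahedron $S$ touches $P$ at the centres of the hexagonal faces lying in the facets of $S$. Deleting $x_{ij}$ cuts $P$ only by the plane $\lambda_i+b_0\lambda_j=1-\tau$ through its three neighbours, and $\lambda_i+b_0\lambda_j<1-\tau$ on all of $B_r$. Hence $B_{r+\zeta}\subseteq P_{-ij}+B_\zeta\subseteq P+B_\zeta\subseteq S_\zeta$. An AM--GM bound on $\det A$, using $\sum_k u_ku_k^{\top}=\tfrac43 I$ for the facet normals of $S$, shows that $B_{r+\zeta}$ is the maximum-volume ellipsoid of the enlarged tetrahedron $S_\zeta$, hence of both intermediate bodies. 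No symmetry argument for $P$ itself is needed.
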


Proposition~\ref{prop:john} concerns the proof strategy of~\cite{BLS25}, not their results: their Lemma~8 (``some removal shrinks the John ellipsoid by a factor $0.85$, or some ordered pair is informative'') is stated and proved for the monotone class only, and Proposition~\ref{prop:john} shows that its literal analogue for $\Fblr$ is false for every fixed shrink factor $\gamma<1$ in place of $0.85$. It does not exclude other uses of John ellipsoids.

\subsection{Proof overview}\label{sec:overview}

The argument has five steps; we record for each its input, its output and its cost in the dimension, so that the exponent $9/2$ can be traced.
\begin{enumerate}[leftmargin=2em,itemsep=1pt]
\item \emph{Two thin bands (Section~\ref{sec:structure}).} Every $f\in\Fblr$ has a representation $\ell(\ip{\cdot}{\theta})$ with $\ell$ convex, $1$-Lipschitz and globally minimised at the projections of the minimisers (Lemma~\ref{lem:rep}). If a row $f$ of a configuration is uninformative, convexity of $\ell$ along $\theta_f$ forces the projections of all other minimisers into two thin bands at the extreme projections, on either side of $\ip{x_f}{\theta_f}$, each of relative width $\eta\approx6/(c(d+1))$ (Lemma~\ref{lem:bands}). For monotone links only one band can be non-empty; the second band is the whole difficulty.
\item \emph{Balanced and unbalanced rows (Section~\ref{sec:size}).} Rows whose two band distances $L,R$ are comparable are handled by a quadratic evaluation matrix of rank at most $h(h+1)/2$ with small off-diagonal entries, giving at most $h^2-1$ such rows (Lemma~\ref{lem:tracerank}). For the remaining rows this quadratic estimate deteriorates when the two band distances are highly imbalanced, so we eliminate the far-band entries with an exactly low-rank Boolean mask instead: the near-band membership matrix $H$, a $0$-$1$ matrix, is within $1/(8h)$ of an affine evaluation matrix of rank at most $h$, and Lemma~\ref{lem:round} gives $\rank H\le2h-1$. Masking an affine matrix by $H$ (Lemma~\ref{lem:hadamard}) then yields a near-identity matrix of rank at most $h(2h-1)$, hence at most $2h^2-h$ unbalanced rows. Altogether $|C|\le3h^2-h-1=O(d^2)$ (Theorem~\ref{thm:size}), which is tight up to constants (Proposition~\ref{prop:lower}). The rounding tolerance $1/(4r)$ with $r=h$ is what forces $c=\Theta(d)$.
\item \emph{Counting (Section~\ref{sec:ir}).} If a configuration has $G+q$ elements, deleting first endpoints of informative pairs $q$ times shows that its ordered pairs carry energy at least $q\delta^2$ (Lemma~\ref{lem:count}). Within one dyadic level of the gap $\fbar(x_f)-f^\star$, a $k$-tuple is cut into $b=4ch$ blocks of $s=2G$ functions; at least half the blocks are configurations, each contributing $G\delta^2$, so every $k$-tuple has energy at least $\varepsilon^2/12$ with $k=bs=3072h^4$. Lemma~\ref{lem:decomp} turns this into $(\alpha,12k(k-1)m_\alpha)\in\IR(\Fblr)$, i.e.\ $\beta=O(d^8\log(1/\alpha))$ (Theorem~\ref{thm:ir}). The exponent $8$ is $2\times4$: $k$ is the number of blocks $\Theta(cd)=\Theta(d^2)$ times the block size $\Theta(G)=\Theta(d^2)$.
\item \emph{Transfer (Section~\ref{sec:transfer}).} A cover of $\Fblr$ by $N_\rho\le(1+2D/\rho)^d(1+8D/\rho)^d$ value-based pieces, built from infimal-convolution approximations that keep the selected minimiser exact (Lemma~\ref{lem:cover}), together with the information-ratio bound applied to finitely-valued proxies, gives $\BReg_n\le n\alpha+n\rho(6+4\sqrt\beta)+\sqrt{\beta n\log N_\rho/2}$ for any fixed measurable selection (Theorem~\ref{thm:transfer}); $\log N_{1/n}=O(d\log(e+nD))$ costs $\sqrt d$.
\item \emph{Assembly.} With $\alpha=\rho=1/n$, $\sqrt\beta=O(d^4\sqrt{\log n})$ and $\sqrt{\log N_{1/n}}=O(\sqrt{d\log(e+nD)})$ give $\BReg_n=O(d^4\sqrt{dn}\log(e+nd\max\{1,D\}))=\tilde O(d^{9/2}\sqrt n)$ (Theorem~\ref{thm:main}).
\end{enumerate}

\section{Structure of uninformative rows}\label{sec:structure}

\begin{lemma}[Ridge representation with a global minimiser]\label{lem:rep}
For every $f\in\Fblr$ there exist a unit vector $\theta$ and a convex, $1$-Lipschitz $\ell:\R\to\R$ such that $f=\ell(\ip{\cdot}{\theta})$ on $K$, $\ell\ge f^\star\ge0$ on all of $\R$, and $\ell(\ip{x}{\theta})=f^\star=\min_\R\ell$ for every minimiser $x$ of $f$ on $K$. In particular $|f(x)-f(y)|\le|\ip{\theta}{x-y}|$ for all $x,y\in K$.
\end{lemma}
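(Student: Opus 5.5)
We start from an arbitrary ridge representation $f=\ell_0(\ip{\cdot}{\theta_0})$ with $\ell_0$ convex, and first dispose of the degenerate case. If $\theta_0=0$ or $f$ is constant on $K$, then $f\equiv f^\star$. In that case we take any unit $\theta$ and the constant link $\ell\equiv f^\star$, and every claim is immediate.

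Otherwise we normalise $\theta=\theta_0/\norm{\theta_0}$ and set $\ell_1(s)=\ell_0(\norm{\theta_0}s)$, so that $f=\ell_1(\ip{\cdot}{\theta})$ on $K$. The link $\ell_1$ may fail to be $1$-Lipschitz outside the projection interval $[a,b]=\{\ip{x}{\theta}:x\in K\}$. It may also fail to be minimised on $\R$ at the projections of the minimisers, because its global minimum could lie outside $[a,b]$. So we redefine the link, keeping only its values on $[a,b]$.

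The first step is to check the Lipschitz constant of $\ell_1$ on $[a,b]$. Let $s<s'$ lie in $[a,b]$. Choose $x\in K$ with $\ip{x}{\theta}=s$, and write $y=x+(s'-s)\theta$. In general $y\notin K$, so we compare along segments instead. Let $x_s,x_{s'}$ be points of $K$ with projections $s$ and $s'$. By convexity of $K$, the segment between them lies in $K$ and its projection covers $[s,s']$. Hence $\ell_1$ restricted to $[a,b]$ satisfies
\[
|\ell_1(u)-\ell_1(u')|\le \mathrm{Lip}(f)\,\norm{x_s-x_{s'}}\,|u-u'|/(s'-s).
\]
This estimate alone is too weak. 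The right argument uses the directional structure. Since $K$ has nonempty interior, for $s$ in the open interval $(a,b)$ we can pick a point $x$ with projection $s$ in the interior of the slab section, and small moves $x+t\theta$ stay in $K$. Then $|\ell_1(s+t)-\ell_1(s)|=|f(x+t\theta)-f(x)|\le|t|$. So $\ell_1$ is locally $1$-Lipschitz on $(a,b)$, hence $1$-Lipschitz on $[a,b]$ by continuity of the convex function $f$ up to the boundary. I expect this to be the main technical point: it needs existence of points with a given projection that admit a small segment in direction $\theta$. This holds because the set of interior points of $K$ projects onto $(a,b)$.

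The second step modifies the link. Let $M=\{s\in[a,b]:\ell_1(s)=f^\star\}$. Note $f^\star=\min_{[a,b]}\ell_1$ and $M$ is a nonempty closed interval $[p,q]$. Define
\[
\ell(s)=\max\bigl\{f^\star,\ \sup_{u\in[a,b]}\bigl(\ell_1(u)+\ell_1'(u)(s-u)\bigr)\bigr\},
\]
where $\ell_1'$ denotes one-sided derivatives, clipped to $[-1,1]$. Equivalently, we extend $\ell_1$ outside $[a,b]$ by $\max\{f^\star,\text{supporting line}\}$. A more explicit choice is to set $\ell=\ell_1$ on $[a,b]$, to let $\ell$ be constant equal to $\ell_1(b)$ for $s>b$ when the right slope at $b$ is $\le0$, and to extend it linearly with that slope (which lies in $[0,1]$) otherwise, and symmetrically at $a$.

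With this choice $\ell$ is convex and has slopes in $[-1,1]$, so it is $1$-Lipschitz. The extension is non-decreasing to the right of $b$ and non-increasing to the left of $a$. Therefore $\min_\R\ell=\min_{[a,b]}\ell_1=f^\star$, and $\ell\ge f^\star\ge0$ everywhere, using $f\ge0$. Every minimiser $x$ of $f$ on $K$ has $\ell(\ip{x}{\theta})=f(x)=f^\star=\min_\R\ell$.

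The final inequality then follows: $|f(x)-f(y)|=|\ell(\ip{x}{\theta})-\ell(\ip{y}{\theta})|\le|\ip{\theta}{x-y}|$.
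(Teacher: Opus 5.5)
Your proof is correct and follows the paper's route: you normalise the direction, get the $1$-Lipschitz bound on $(a,b)$ from short segments in direction $\theta$ through interior points (the exploratory segment estimate you discard is not needed), and then extend the link outside $[a,b]$. The only difference is the extension: you use the smallest admissible one (the endpoint tangent line, or the constant $f^\star$ when that slope has the wrong sign), whereas the paper extends with slopes $-1$ on the left and $+1$ on the right, which is the profile later produced by the infimal convolution of Lemma~\ref{lem:meas}; either choice proves the statement as written.
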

The proof (Appendix~\ref{app:a}) shows that the link is $1$-Lipschitz on the projection interval of $K$ and extends it beyond that interval with slopes $\mp1$; a constant extension would \emph{not} be convex when the minimiser is interior. The extension is defined on all of $\R$ and attains its global minimum at the projections of the minimisers of $f$; the part of it outside the projection interval is used in Section~\ref{sec:transfer}.

\begin{lemma}[Two thin extreme bands]\label{lem:bands}
Let $c\ge12$ and let $C$ be a configuration (Definition~\ref{def:uninf}). Fix $f\in C$ with representation $(\theta_f,\ell_f)$ from Lemma~\ref{lem:rep}, and write $s=\ip{x_f}{\theta_f}$, $t_g=\ip{x_g}{\theta_f}$, $a=\min_{g\in C}t_g$, $b=\max_{g\in C}t_g$, $L=s-a$, $R=b-s$, and
\[
\eta:=\frac{3\delta}{\varepsilon/2+\delta}=\frac{3}{c(d+1)/2+1}.
\]
If the row $f$ is uninformative then for every $g\ne f$:
\begin{enumerate}[label=(\roman*),leftmargin=2em]
\item $|t_g-s|>\varepsilon/2-2\delta\ge 5\varepsilon/12$;
\item $t_g\in(b-\eta R,\,b]$ if $t_g>s$, and $t_g\in[a,\,a+\eta L)$ if $t_g<s$.
\end{enumerate}
\end{lemma}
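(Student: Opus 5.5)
The plan is to read everything off the uninformativity inequalities along the single direction $\theta_f$, using only three ingredients: the representation of Lemma~\ref{lem:rep}, the Lipschitz bound, and convexity of $\ell_f$ between $s$ and an extreme projection.

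\emph{Setup.} Write $\ell=\ell_f$ and $f^\star=\ell(s)$; this is legitimate because $x_f$ is a minimiser, so Lemma~\ref{lem:rep} gives $\ell(s)=f^\star=\min_\R\ell$. For every $g\in C$ we have $f(x_g)=\ell(t_g)$. Since the row $f$ is uninformative, for each $g\ne f$
\[
\fbar(x_g)-\delta<\ell(t_g)<\fbar(x_g)+\delta .
\]

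\emph{Part (i).} Combine the lower inequality with the configuration conditions $|\fbar(x_g)-\fbar(x_f)|\le\delta$ and $\fbar(x_f)=f^\star+r_f$ with $r_f\ge\varepsilon/2$. This gives
\[
\ell(t_g)-f^\star>r_f-2\delta\ge\varepsilon/2-2\delta .
\]
The Lipschitz bound of Lemma~\ref{lem:rep} gives $\ell(t_g)-f^\star=f(x_g)-f(x_f)\le|t_g-s|$, which proves the strict inequality in (i). For the numerical bound, note that $c\ge12$ and $d\ge1$ give $\delta\le\varepsilon/24$. Hence $\varepsilon/2-2\delta\ge\varepsilon/2-\varepsilon/12=5\varepsilon/12$.

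\emph{Part (ii), right band.} Take $g\ne f$ with $t_g>s$, so $b>s$ and $R>0$. Choose $g^\ast\in C$ with $t_{g^\ast}=b$; by (i) and $t_{g^\ast}\ge t_g>s$ we have $g^\ast\ne f$. If $t_g=b$ there is nothing to prove. Otherwise put $\lambda=(t_g-s)/R\in(0,1)$, $A=\ell(t_g)-f^\star$ and $B=\ell(b)-f^\star$.

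\begin{itemize}
\item \textbf{Convexity bound.} Convexity of $\ell$ on $[s,b]$ with $\ell(s)=f^\star$ gives $A\le\lambda B$.
\item \textbf{Key step: comparing $g$ with $g^\ast$ directly.} The main point is not to compare both $g$ and $g^\ast$ with $x_f$. That route only yields $B<r_f+2\delta$ and $A>r_f-2\delta$, hence the weaker width $4\delta/(\varepsilon/2+2\delta)$. Instead, use the configuration condition $|\fbar(x_g)-\fbar(x_{g^\ast})|\le\delta$ together with uninformativity at both $g$ and $g^\ast$:
\[
B-A=\ell(b)-\ell(t_g)<\bigl(\fbar(x_{g^\ast})+\delta\bigr)-\bigl(\fbar(x_g)-\delta\bigr)\le3\delta .
\]
\item \textbf{Monotonicity argument.} Then $\lambda\ge A/B>A/(A+3\delta)$. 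Here $B>0$ because $A>0$ by (i), and $A/B>A/(A+3\delta)$ since $B<A+3\delta$. The map $u\mapsto u/(u+3\delta)$ is increasing on $(0,\infty)$, and $A>\varepsilon/2-2\delta$ from part (i). Therefore
\[
\lambda>\frac{\varepsilon/2-2\delta}{\varepsilon/2+\delta}=1-\eta .
\]
This is exactly $t_g>s+(1-\eta)R=b-\eta R$. Together with $t_g\le b$ it places $t_g$ in $(b-\eta R,\,b]$.
\end{itemize}

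\emph{Part (ii), left band.} Apply the same argument with $a$ and $L$ in place of $b$ and $R$, using convexity of $\ell$ on $[a,s]$. This gives $t_g\in[a,\,a+\eta L)$ whenever $t_g<s$.

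Finally, substituting $\delta=\varepsilon/(c(d+1))$ into $\eta=3\delta/(\varepsilon/2+\delta)$ gives $\eta=3/(c(d+1)/2+1)$, the stated form.
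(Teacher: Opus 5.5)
Your proof is correct and is essentially the paper's argument. You use the same interpolation of $\ell_f$ between its minimum at $s$ and its value at the extreme projection $b$, and the same two comparisons ($x_g$ against $x_f$, and the extreme point against $x_g$), which give exactly $\eta=3\delta/(\varepsilon/2+\delta)$. The only difference is that you solve the final inequality through the increasing map $u\mapsto u/(u+3\delta)$, whereas the paper substitutes the bounds into the convexity inequality and rearranges linearly.
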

The proof (Appendix~\ref{app:a}) is a two-line convexity argument: uninformativeness pins $f(x_g)$ to within $\delta$ of $\fbar(x_g)$, which is at least $f^\star+\varepsilon/2-2\delta$, giving (i); and interpolating $\ell_f$ between its minimum at $s$ and its value at the extreme projection $b$ shows that any $t_g$ strictly inside $(s,b)$ would make $f(x_g)$ smaller than $\fbar(x_g)-\delta$ unless $t_g$ is within relative distance $\eta$ of $b$, giving (ii).
In particular, along its own direction each uninformative row sees the other minimisers at one of two projected distances, $L$ and $R$, each determined up to a relative error $\eta$.

\input{part2}
\input{part3}

\input{part4}

\end{document}

%% file: part2.tex
\section{The cardinality bound}\label{sec:size}

\subsection{Three linear-algebra tools}

\begin{lemma}[Trace--rank bound: the classical symmetric bound of {\cite[Lemma~2.2]{Alon09}} and its non-symmetric extension]\label{lem:tracerank}
For any real $N\times N$ matrix $M$, $|\tr M|^2\le\rank(M)\,\norm{M}_F^2$. Consequently, if $M_{ii}=1$ for all $i$, $|M_{ij}|\le t$ for $i\ne j$, $\rank M\le Q$ and $Qt^2<1$, then
\[
N\le\frac{Q(1-t^2)}{1-Qt^2}\le\frac{Q}{1-Qt^2};
\]
and if moreover $Q\ge1$ is an integer and $t<1/Q$, then $N\le Q$.
\end{lemma}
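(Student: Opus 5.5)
The plan has three steps: prove the inequality $|\tr M|^2\le\rank(M)\norm{M}_F^2$ through the singular value decomposition, then substitute the hypotheses on $M$ to get the general bound on $N$, and finally specialise to $t<1/Q$.

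\emph{The trace--rank inequality.} Let $r=\rank M$ and write $M=\sum_{i=1}^r\sigma_iu_iv_i^\top$ as a singular value decomposition. The trace is bounded by the nuclear norm:
\[
|\tr M|=\Big|\sum_i\sigma_i v_i^\top u_i\Big|\le\sum_i\sigma_i,
\]
since $|v_i^\top u_i|\le1$. This is von Neumann's trace inequality against the identity; equivalently, $\tr M=\ip{M}{I}$ and $\norm{I}_{\mathrm{op}}=1$. Cauchy--Schwarz over the $r$ nonzero singular values then gives
\[
\Big(\sum_i\sigma_i\Big)^2\le r\sum_i\sigma_i^2=r\norm{M}_F^2.
\]
No symmetry is used anywhere, and this is the only point where the non-symmetric extension differs from the classical symmetric argument, which proceeds through eigenvalues.

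\emph{The bound on $N$.} Under the hypotheses we have $\tr M=N$ and $\norm{M}_F^2\le N+N(N-1)t^2$. The inequality just proved gives
\[
N^2\le Q\big(N+N(N-1)t^2\big).
\]
Dividing by $N$ yields $N\le Q\bigl(1+(N-1)t^2\bigr)$, which rearranges to $N(1-Qt^2)\le Q(1-t^2)$. Since $Qt^2<1$, we may divide to get
\[
N\le\frac{Q(1-t^2)}{1-Qt^2},
\]
and the second displayed bound follows because $1-t^2\le1$.

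\emph{The case $t<1/Q$.} First, $Qt^2<Q/Q^2=1/Q\le1$, so the general bound applies. It remains to show that $\frac{Q(1-t^2)}{1-Qt^2}<Q+1$, because integrality of $N$ then forces $N\le Q$. This inequality is equivalent to
\[
Q-Qt^2<(Q+1)(1-Qt^2),
\]
that is, $0<1-Q^2t^2$, which holds exactly when $t<1/Q$.

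This last reduction is the only step that needs care, and it is a short computation.
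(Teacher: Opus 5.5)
Your proof is correct. The deduction of the bound on $N$ and the integer case $t<1/Q$ (reducing $Q(1-t^2)/(1-Qt^2)<Q+1$ to $1-Q^2t^2>0$) coincide with the paper's argument.

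The only difference is in how you prove the trace--rank inequality itself. The paper takes the orthogonal projection $\Pi$ onto the column space of $M$, writes $\tr M=\tr(\Pi M)=\ip{\Pi}{M}_F$, and applies Cauchy--Schwarz in the Frobenius inner product with $\norm{\Pi}_F^2=\rank M$. You instead go through the SVD, bound $|\tr M|$ by the nuclear norm, and apply Cauchy--Schwarz to the vector of nonzero singular values.

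What each buys:
\begin{itemize}
\item The paper's version is a one-liner and needs no spectral decomposition.
\item Yours records the slightly stronger intermediate fact $|\tr M|\le\norm{M}_*\le\sqrt{\rank M}\,\norm{M}_F$.
\end{itemize}
Both are the same Cauchy--Schwarz step in different coordinates: in your notation $\Pi=\sum_i u_iu_i^{\top}$. Neither route uses symmetry.
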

For symmetric $M$ the second claim is~\cite[Lemma~2.2]{Alon09}; the trace--rank inequality for arbitrary real matrices also appears as~\cite[Fact~10]{RVR16}. The short proof (via the orthogonal projection onto the column space of $M$) is in Appendix~\ref{app:b}.

\begin{lemma}[Hadamard products]\label{lem:hadamard}
For real matrices $U,V$ of the same shape, $\rank(U\circ V)\le\rank(U)\rank(V)$, where $\circ$ is the entrywise product.
\end{lemma}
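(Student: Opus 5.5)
The plan is to realise the entrywise product as a submatrix of a Kronecker product and then use multiplicativity of rank under $\otimes$. Let $U,V$ be $p\times q$ matrices. The first step is to write $U$ in low-rank form: with $r=\rank U$ and $s=\rank V$, write
\[
U=\sum_{a=1}^{r}u_a w_a^\top,\qquad V=\sum_{b=1}^{s}v_b z_b^\top ,
\]
with $u_a,v_b\in\R^p$ and $w_a,z_b\in\R^q$. Such decompositions exist by taking rank factorisations, for instance from a column basis and the corresponding coefficients. The second step is to expand the entrywise product of two rank-one matrices: $(u w^\top)\circ(v z^\top)=(u\circ v)(w\circ z)^\top$, since the $(i,j)$ entry of each side is $u_iw_jv_iz_j$.

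By bilinearity of $\circ$, this gives
\[
U\circ V=\sum_{a=1}^r\sum_{b=1}^s (u_a\circ v_b)(w_a\circ z_b)^\top ,
\]
which is a sum of at most $rs$ rank-at-most-one matrices. Subadditivity of rank then yields $\rank(U\circ V)\le rs=\rank(U)\rank(V)$. If either matrix is zero the claim is trivial, which also covers $r=0$ or $s=0$ in the sums above.

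An equivalent route, if a more structural phrasing is preferred, is to observe that $U\circ V$ is the submatrix of $U\otimes V$ with rows $(i,i)$ and columns $(j,j)$. Since $\rank(U\otimes V)=\rank U\,\rank V$ and the rank of a submatrix never exceeds that of the full matrix, the same bound follows. There is no real obstacle here. The only point needing care is the rank-one identity for entrywise products, which is a one-line entry check.
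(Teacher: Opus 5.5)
Your proof is correct and is essentially the paper's own argument: rank decompositions of $U$ and $V$, the rank-one identity $(uw^\top)\circ(vz^\top)=(u\circ v)(w\circ z)^\top$ and bilinearity write $U\circ V$ as a sum of $\rank(U)\rank(V)$ rank-at-most-one terms. Your alternative route, taking the submatrix of $U\otimes V$ with rows $(i,i)$ and columns $(j,j)$, is also valid but not needed.
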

\begin{proof}
Write $U=\sum_{a\le p}u_av_a^{\top}$ and $V=\sum_{b\le q}y_bz_b^{\top}$ with $p=\rank U$, $q=\rank V$; then $U\circ V=\sum_{a,b}(u_a\circ y_b)(v_a\circ z_b)^{\top}$.
\end{proof}

\begin{lemma}[Integer rounding preserves rank up to a factor two]\label{lem:round}
Let $r\ge1$ be an integer, let $A$ be a real matrix with $\rank A\le r$, and let $B$ be a $0$-$1$ matrix of the same shape with $\norm{A-B}_{\max}\le\gamma$. If $\gamma\le1/(4r)$ then $\rank B\le 2r-1$.
\end{lemma}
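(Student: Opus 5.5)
The plan is to argue directly on a nonsingular square $0$-$1$ submatrix of $B$. Set $m=\rank B$ and choose an $m\times m$ submatrix $B_0$ of $B$ with $\det B_0\neq0$. Let $A_0$ be the corresponding submatrix of $A$; then $\rank A_0\le r$ and $\norm{A_0-B_0}_{\max}\le\gamma\le1/(4r)$, so it suffices to show $m\le 2r-1$. The case $r=1$ serves as a model, and I would check it first by a ratio argument. Write $A=uv^{\top}$. Entries of $B$ equal to $1$ force $|u_iv_j|\ge3/4$, and entries equal to $0$ force $|u_iv_j|\le1/4$. Now take two rows $i,i'$ of $B$ that are not identical and not both zero. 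Comparing $u_{i'}/u_i$ on a column in the support of one row but not the other, and on a column in the support of the first row, gives incompatible bounds ($\le1/3$ against $\ge3/5$ or $\ge3$). Hence all nonzero rows of $B$ coincide and $\rank B\le1=2r-1$.

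For general $r$ I would extend this by a \emph{maximal-volume basis}. Assume $\rank A_0=r'\le r$ and pick an $r'\times r'$ submatrix $A_{ST}$ of $A_0$ with maximal $|\det|$. By Cramer's rule, every row of $A_0$ is $A_i=\sum_{s\in S}\lambda_{is}A_s$ with $|\lambda_{is}|\le1$. Write $p_j=(B_{sj})_{s\in S}\in\{0,1\}^{r'}$ for the Boolean pattern of column $j$ on the rows $S$. Then
\[
\Big|B_{ij}-\ip{\lambda_i}{p_j}\Big|\le |B_{ij}-A_{ij}|+\Big|\sum_s\lambda_{is}(A_{sj}-B_{sj})\Big|\le \gamma(1+r'),
\]
which is at most $1/2$ when $\gamma\le 1/(4r)$. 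So $B_{ij}$ is obtained by rounding the affine quantity $\ip{\lambda_i}{p_j}$. To turn this into an exact rank bound, I would decompose $B$ as
\[
B=\Lambda P+(B-\Lambda P),
\]
where $\Lambda P$ has rank $\le r'$. The residual $R=B-\Lambda P$ has entries of size at most $\gamma(1+r')$, and these entries are concentrated where rounding is non-trivial. The goal is to show that $R$, restricted to a well-chosen set of rows and columns, has rank at most $r'-1$. Subadditivity of rank would then give $\rank B\le 2r'-1\le 2r-1$.

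\textbf{Main obstacle.} The whole difficulty is bounding the rank of the rounding residual $R$ by $r-1$ rather than by something exponential in $r$; the number of distinct patterns $p_j$ could be as large as $2^{r}$. My first attempt is an induction on $r$ that mimics the $r=1$ ratio argument. Pick a row $s_0\in S$ and split the columns according to $B_{s_0j}\in\{0,1\}$. On each part, subtracting the appropriate multiple of row $s_0$ of $A$ should leave a matrix of rank $\le r-1$ that is still within roughly $\gamma$ (with controlled growth) of a $0$-$1$ matrix. The induction hypothesis then applies on each part, and the two parts are glued with a loss of one in rank, which gives the recursion $2(r-1)-1+2=2r-1$.

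What must be verified is that the tolerance degrades slowly enough: after $r$ steps the accumulated error must stay below $1/2$. This is exactly where $\gamma\le1/(4r)$ is needed, with $r$ steps at error about $2\gamma$ each. If the splitting fails to preserve the $0$-$1$ structure, the fallback is to apply the trace--rank bound of Lemma~\ref{lem:tracerank} to $B_0^{-1}A_0$, which is within $O(\gamma)$ of the identity in a suitable sense. That route would bound $m$ by roughly $r/(1-r t^2)$, but whether it reaches $2r-1$ exactly is unclear.
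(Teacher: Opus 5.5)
Your proposal has a genuine gap. The step that actually produces the bound $2r-1$ is never carried out, and neither route you sketch for it works as stated.

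\textbf{What is correct, and why it is not enough.} The reduction to a non-singular square submatrix $B_0$, the $r=1$ ratio argument, and the max-volume/Cramer estimate $|B_{ij}-\ip{\lambda_i}{p_j}|\le\gamma(1+r')\le1/2$ are all correct. But they only say that $B$ is an entrywise rounding of a rank-$r'$ matrix. The residual $R=B-\Lambda P=(B-A)-\Lambda(B_S-A_S)$ is then just a matrix with small entries (vanishing on the rows $S$). Small entries impose no rank bound at all: $\gamma I$ has full rank. Nothing in your argument converts the integrality of $B$ into $\rank R\le r'-1$.

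\textbf{Why the induction fails.}
\begin{enumerate}[leftmargin=2em]
\item The accounting is wrong. If you split the columns into $\{j:B_{s_0j}=0\}$ and $\{j:B_{s_0j}=1\}$ and bound each part separately, the ranks add. You get roughly $2(2r-3)+1$, not $2(r-1)-1+2$. Iterating this gives exponential growth in $r$, which is exactly the $2^r$-patterns problem you flagged.
\item The hypothesis is not preserved on the part $\{B_{s_0j}=1\}$. Removing the $A_{s_0}$-component turns entries into roughly $B_{ij}-\lambda_{is_0}$. This is close to a $0$-$1$ matrix only if $\lambda_{is_0}$ is close to an integer.
\item The tolerance is too large even on the part $\{B_{s_0j}=0\}$. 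There the new tolerance is $\gamma+|\lambda_{is_0}|\gamma\le2\gamma=1/(2r)$. This exceeds the $1/(4(r-1))$ the induction hypothesis requires once $r\ge3$.
\end{enumerate}

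\textbf{Why the fallback fails.} You propose applying Lemma~\ref{lem:tracerank} to $B_0^{-1}A_0$. But $I-B_0^{-1}A_0=B_0^{-1}(B_0-A_0)$, and the entries of $B_0^{-1}$ for a non-singular $0$-$1$ matrix can be exponentially large in its size. So $B_0^{-1}A_0$ is not close to the identity in any sense that feeds Lemma~\ref{lem:tracerank}.

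\textbf{The missing idea.} Use integrality through the determinant rather than through entrywise rounding.
\begin{enumerate}[leftmargin=2em]
\item Suppose $\rank B\ge2r$ and take a non-singular $2r\times2r$ submatrix $B_0$, with $A_0$ and $E=B_0-A_0$ as you have them, so $\norm{E}_F^2\le4r^2\gamma^2$.
\item Since $\dim\ker A_0\ge r$, let $\Pi$ be the orthogonal projection onto an $r$-dimensional subspace of $\ker A_0$. Then $B_0\Pi=E\Pi$.
\item Expanding $\norm{B_0\Pi}_F^2$ in the right singular vectors of $B_0$ shows that the $r$ smallest squared singular values of $B_0$ sum to at most $\norm{E}_F^2\le4r^2\gamma^2$.
\item The $r$ largest squared singular values sum to at most $\norm{B_0}_F^2\le4r^2-1$, because a non-singular $0$-$1$ matrix of size at least $2$ has a zero entry.
\item The arithmetic--geometric mean inequality on each group gives $|\det B_0|\le(4r\gamma)^r(1-1/(4r^2))^{r/2}<1$ for $\gamma\le1/(4r)$. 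This contradicts $|\det B_0|\ge1$ for a non-singular integer matrix.
\end{enumerate}
This argument needs no case analysis on column patterns. It is where both the factor $2$ (the kernel of $A_0$ has dimension at least half of $2r$) and the tolerance $1/(4r)$ come from.
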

\paragraph{Proof sketch.}
If $\rank B\ge2r$, take a non-singular $2r\times2r$ submatrix $B_0$ of $B$, the corresponding $A_0$, and $E=B_0-A_0$, so $\rank A_0\le r$ and $\norm{E}_F^2\le4r^2\gamma^2$. On an $r$-dimensional subspace of $\ker A_0$ the matrix $B_0$ coincides with $E$, so the $r$ smallest squared singular values of $B_0$ sum to at most $4r^2\gamma^2$, while the $r$ largest sum to at most $\norm{B_0}_F^2\le4r^2-1$ (a non-singular $0$-$1$ matrix has a zero entry). The arithmetic--geometric mean inequality on each group gives $|\det B_0|\le(4r\gamma)^r(1-1/(4r^2))^{r/2}<1$ for $\gamma\le1/(4r)$, contradicting $|\det B_0|\ge1$ for a non-singular integer matrix. The details are in Appendix~\ref{app:b}.

We have not found this elementary statement in the literature in this form, but we do not claim priority for it; the approximate-rank literature (e.g.~\cite{Alon09,ALSV13}) studies both upper and lower bounds on the $\varepsilon$-approximate rank of real matrices and their algorithmic applications, and we use the elementary rounding statement above only in its specific role, namely to turn a band-membership matrix, which is only approximately low-rank, into an \emph{exactly} low-rank $0$-$1$ mask; the integrality of the mask is what allows the two bands of Lemma~\ref{lem:bands} to be handled without any control on the ratio of their distances. The tolerance $1/(4r)$ cannot be raised to order $1/\sqrt r$ for arbitrary $0$-$1$ matrices (Proposition~\ref{prop:hadamard} in Appendix~\ref{app:e}).

\subsection{Proof of Theorem~\ref{thm:size}}

Let $h=d+1$, $c=96h$, so that
\begin{equation}\label{eq:eta}
\eta=\frac{3}{48h^2+1}<\frac1{16h^2},\qquad\delta=\frac{\varepsilon}{96h^2}.
\end{equation}
Let $C$ be an uninformative configuration with $N=|C|\ge3$ (for $N\le2$ there is nothing to prove). For each row $i\in C$ take the representation of Lemma~\ref{lem:rep} and the quantities $t_{ij}=\ip{x_j}{\theta_i}$, $s_i$, $a_i$, $b_i$, $L_i$, $R_i$ of Lemma~\ref{lem:bands}. By Lemma~\ref{lem:bands}(i), $L_i+R_i>0$. Replacing $(\theta_i,\ell_i)$ by $(-\theta_i,\ell_i(-\cdot))$ if necessary (this changes neither the function nor its selected minimiser), we may assume
\[
0\le L_i\le R_i,\qquad R_i>0,\qquad w_i:=L_i+R_i,\qquad\lambda_i:=L_i/w_i\in[0,1/2].
\]
By Lemma~\ref{lem:bands}(ii) every $j\ne i$ is either in the \emph{near band} $t_{ij}\in[a_i,a_i+\eta L_i)$ (empty if $L_i=0$) or in the \emph{far band} $t_{ij}\in(b_i-\eta R_i,b_i]$. Throughout, $a_i,b_i,s_i,L_i,R_i$ refer to the whole configuration $C$ and are never recomputed for subsets. Split
\[
I_B=\{i:\lambda_i>1/(8h)\},\qquad I_U=\{i:\lambda_i\le1/(8h)\}.
\]

\paragraph{Balanced rows.}
For $i\in I_B$ we have $L_iR_i>0$ and $R_i/L_i<8h-1$. Consider the quadratic polynomial $p_i(x)=(\ip{x}{\theta_i}-a_i)(b_i-\ip{x}{\theta_i})/(L_iR_i)$ and the matrix $P=(p_i(x_j))_{i,j\in I_B}$. Then $P_{ii}=1$, and for $j\ne i$ in the near band $0\le P_{ij}<\eta(L_i+R_i)/R_i\le2\eta$, in the far band $0\le P_{ij}<\eta(L_i+R_i)/L_i<8h\eta$; so all off-diagonal entries are below $t_B:=8h\eta<1/(2h)$. Each row of $P$ is a polynomial of total degree at most two in $x_j\in\R^d$, so $\rank P\le Q_B:=1+d+d(d+1)/2=h(h+1)/2$. Since $Q_Bt_B^2<(h+1)/(8h)\le3/16$, Lemma~\ref{lem:tracerank} gives
\begin{equation}\label{eq:IB}
|I_B|\le\frac{Q_B}{1-Q_Bt_B^2}<\frac{4h^2(h+1)}{7h-1}<h^2,\qquad\text{i.e. }|I_B|\le h^2-1,
\end{equation}
where the last strict inequality uses $7h-1-4(h+1)=3h-5>0$ for $h\ge2$.

\paragraph{Unbalanced rows.}
All matrices below are indexed by $I_U\times I_U$. Define the $0$-$1$ matrix $H$ by $H_{ii}=1$ and, for $j\ne i$, $H_{ij}=1$ if $t_{ij}$ lies in the near band of row $i$ and $H_{ij}=0$ if it lies in the far band. Define the affine evaluation matrix $A_{ij}=(b_i-t_{ij})/w_i$; each row is an affine function of $x_j$, so $\rank A\le h$. Entrywise, $|A_{ii}-1|=\lambda_i\le1/(8h)$; for $j$ in the near band $|A_{ij}-1|=(t_{ij}-a_i)/w_i<\eta\lambda_i$; for $j$ in the far band $|A_{ij}|=(b_i-t_{ij})/w_i<\eta(1-\lambda_i)$. By~\eqref{eq:eta}, $\norm{A-H}_{\max}\le1/(8h)$, so Lemma~\ref{lem:round} with $r=h$ gives $\rank H\le2h-1$.

Next define $U_{ij}=(t_{ij}-a_i)/L_i$ if $L_i>0$ and $U_{ij}=1$ if $L_i=0$; again each row is affine in $x_j$, so $\rank U\le h$. Let $M=U\circ H$. Then $M_{ii}=1$; for $j$ in the far band $M_{ij}=0$ exactly (the mask removes the possibly huge values $U_{ij}$ there); for $j$ in the near band $0\le M_{ij}<\eta$; and rows with $L_i=0$ have no near band, so their off-diagonal entries vanish. By Lemma~\ref{lem:hadamard}, $\rank M\le h(2h-1)=:Q_U$, and $Q_U\eta<h(2h-1)/(16h^2)<1/8$, so $\eta<1/Q_U$ and the integer form of Lemma~\ref{lem:tracerank} gives
\begin{equation}\label{eq:IU}
|I_U|\le Q_U=2h^2-h.
\end{equation}

\paragraph{Conclusion.}
By~\eqref{eq:IB} and~\eqref{eq:IU}, $|C|=|I_B|+|I_U|\le3h^2-h-1<4h^2$. \qed

Only Lemmas~\ref{lem:rep} and~\ref{lem:bands} and finite-dimensional linear algebra are used: no monotonicity, no John ellipsoid, no bound on $D/\varepsilon$, and only the fact that $x_f$ is some minimiser, so the bound holds for every measurable selection. On the family of Proposition~\ref{prop:lower} every row is unbalanced, $H$ is the ``same base vertex'' mask, and $M$ is the identity.

\section{From the cardinality bound to the information ratio}\label{sec:ir}

\begin{lemma}[Counting informative pairs]\label{lem:count}
Fix $c\ge12$ and suppose every uninformative configuration (Definition~\ref{def:uninf}, with this $c$) has at most $G$ elements. Let $q\ge1$ be an integer. If $C$ satisfies the hypotheses of Definition~\ref{def:uninf} and $|C|\ge G+q$, then
\[
\sum_{(f,g)\in\PAIR(C)}\big(f(x_g)-\fbar(x_g)\big)^2\ge q\delta^2.
\]
\end{lemma}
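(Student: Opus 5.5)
We argue by a greedy deletion procedure, as the overview in Section~\ref{sec:overview} indicates. The basic observation is that the hypotheses of Definition~\ref{def:uninf} other than uninformativeness are inherited by subsets. If $C$ satisfies them with comparison function $\fbar$, then so does every $C'\subseteq C$ with the same $\fbar$: the conditions $r_f\in[\varepsilon/2,\varepsilon]$, $|\fbar(x_f)-\fbar(x_g)|\le\delta$ and distinctness of the selected minimisers are pointwise or pairwise in elements of $C'$. Consequently, any $C'\subseteq C$ with $|C'|>G$ is not uninformative. It therefore has a row $f\in C'$ and some $g\in C'\setminus\{f\}$ with $|f(x_g)-\fbar(x_g)|\ge\delta$, i.e.\ an ordered pair $(f,g)\in\PAIR(C')$ whose term is at least $\delta^2$.

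The procedure is as follows. Set $C_0=C$. For $i=0,1,\dots,q-1$ we have $|C_i|=|C|-i\ge G+q-i>G$, so $C_i$ contains an informative ordered pair $(f_i,g_i)$ with $(f_i(x_{g_i})-\fbar(x_{g_i}))^2\ge\delta^2$. Put $C_{i+1}=C_i\setminus\{f_i\}$, removing the first endpoint of the pair. This produces $q$ ordered pairs $(f_0,g_0),\dots,(f_{q-1},g_{q-1})$, all in $\PAIR(C)$.

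These pairs are pairwise distinct. Indeed, $f_i$ is removed at step $i$, so for $j>i$ we have $f_j\in C_j\subseteq C_{i+1}$, which does not contain $f_i$; hence $f_j\ne f_i$. The first coordinates are therefore all different, and so the pairs are different. Since every term of the sum over $\PAIR(C)$ is nonnegative, we get
\[
\sum_{(f,g)\in\PAIR(C)}\big(f(x_g)-\fbar(x_g)\big)^2\ \ge\ \sum_{i<q}\big(f_i(x_{g_i})-\fbar(x_{g_i})\big)^2\ \ge\ q\delta^2.
\]

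The only point needing care is that the definition of $G$ applies to the subsets $C_i$. Two facts settle this. First, $G$ bounds the size of every uninformative configuration with the given $c$, for every comparison function and every selection; this covers $C_i$ with the same $\fbar$ and $\delta=\varepsilon/(c(d+1))$. Second, uninformativeness of a subset is tested only within that subset. Nothing else (no recomputation of bands) is needed, so I expect no real obstacle.
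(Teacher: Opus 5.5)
Your proposal is correct and matches the paper's proof. Both repeatedly delete the first endpoint of an informative ordered pair while more than $G$ elements remain, use that the per-element and pairwise hypotheses of Definition~\ref{def:uninf} survive deletion, and conclude that the $q$ recorded pairs are distinct because their first endpoints are distinct.
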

The proof (Appendix~\ref{app:b}) deletes, $q$ times, the first endpoint of an ordered pair with $|f(x_g)-\fbar(x_g)|\ge\delta$; such a pair exists while more than $G$ elements remain, the hypotheses of Definition~\ref{def:uninf} survive deletion, and the $q$ recorded pairs have distinct first endpoints.

\begin{proof}[Proof of Theorem~\ref{thm:ir}]
Fix a measurable selection, a prior $\xi$ on $\Fblr$, and $\fbar=\E_\xi f$. Let $h=d+1$, $c=96h$ and $G=4h^2$; by Theorem~\ref{thm:size} every uninformative configuration for this $c$ has at most $G$ elements. With $\varepsilon_i=\alpha2^i$ partition $\Fblr$ into $\mathcal F_0=\{f:\fbar(x_f)-f^\star\le\alpha\}$ and $\mathcal F_i=\{f:\fbar(x_f)-f^\star\in(\alpha2^{i-1},\alpha2^i]\}$ for $1\le i\le\lceil\log_2(1/\alpha)\rceil$; these are measurable (they only involve $f\mapsto\fbar(x_f)$ and $f\mapsto f^\star$) and there are $m_\alpha$ of them. The condition of Lemma~\ref{lem:decomp} holds trivially on $\mathcal F_0$. Fix $i\ge1$, $\varepsilon=\varepsilon_i$, $\delta=\varepsilon/(ch)$, and set
\[
b=4ch,\qquad s=2G,\qquad k=bs=8chG=3072\,h^4
\]
(over $b=2ch+u$, $s=G+v$ the argument below yields energy $uv\delta^2$, and $k^2/(uv)$ is minimised at $u=2ch$, $v=G$).
Let $f_1,\dots,f_k\in\mathcal F_i$ be a $k$-tuple. We show
\begin{equation}\label{eq:block}
S:=\sum_{(j,l)\in\PAIR(k)}\big(f_j(x_{f_l})-\fbar(x_{f_l})\big)^2\ge\frac{\varepsilon^2}{12},
\end{equation}
which with $\sup_{f\in\mathcal F_i}(\fbar(x_f)-f^\star)\le\varepsilon\le\sqrt{12S}$ and Lemma~\ref{lem:decomp} (applied with $\beta_0=12$) proves the theorem. If two functions of the tuple share the selected minimiser (in particular if they coincide), the pair contributes $(\fbar(x_f)-f^\star)^2\ge\varepsilon^2/4$. Otherwise order the tuple so that $v_j=\fbar(x_{f_j})$ is non-increasing. If $v_1-v_k\ge2\varepsilon$ then $f_1(x_{f_k})\ge f_1^\star\ge v_1-\varepsilon\ge v_k+\varepsilon$ and the pair $(1,k)$ contributes $\varepsilon^2$. Otherwise cut the ordered tuple into $b$ consecutive blocks of $s$ functions; the sum of the value spreads of the blocks is less than $2\varepsilon$, so fewer than $2\varepsilon/\delta=2ch$ blocks have spread larger than $\delta$, and at least $b-2ch=2ch$ blocks have spread at most $\delta$. Each such block, together with $\fbar$, satisfies Definition~\ref{def:uninf} (its elements have $r_f\in(\varepsilon/2,\varepsilon]$ and distinct minimisers) and has $s=2G=G+G$ elements, so Lemma~\ref{lem:count} with $q=G$ shows that it contributes at least $G\delta^2$; the blocks are disjoint, so
\[
S\ge2ch\cdot G\delta^2=\frac{2G}{ch}\,\varepsilon^2=\frac{8h^2}{96h^2}\,\varepsilon^2=\frac{\varepsilon^2}{12}.
\]
Since $\varepsilon^2/4$ and $\varepsilon^2$ are not smaller, \eqref{eq:block} holds for every tuple. Lemma~\ref{lem:decomp} gives $(\alpha,12k(k-1)m_\alpha)\in\IR(\Fblr)$, and $12k^2=12\cdot3072^2h^8=113{,}246{,}208\,h^8$. Nothing in the argument depends on which measurable selection was fixed.
\end{proof}

In~\cite{BLS25} the role of Lemma~\ref{lem:count} is played by their Lemma~9, which iterates their John-ellipsoid dichotomy (Lemma~8); Proposition~\ref{prop:john} shows that no such dichotomy holds for $\Fblr$, which is why we count through a cardinality bound instead.

%% file: part3.tex
\section{From the information ratio to regret under a fixed selection rule}\label{sec:transfer}

The transfer theorem of~\cite[Theorem~28]{BLS25} bounds the regret of (approximate) TS from an information-ratio bound and a cover of the function class, and its covering argument for ridge classes uses, as the authors note, that ties among minimisers are broken consistently. For $\Fblr$ minimisers can be non-unique (an interior minimising projection level produces a positive-dimensional minimising section when $d\ge2$), so we give a self-contained transfer theorem for exact TS that is valid for any fixed measurable selection rule. Two ingredients differ from~\cite{BLS25}: the cover is built from infimal-convolution approximations, so that the selected minimiser of $f$ is an exact minimiser of its approximation (Lemma~\ref{lem:cover}); and the information-ratio hypothesis is required uniformly over selection rules and extended to finitely-supported randomised minimisers (Lemma~\ref{lem:kernel}), which is what the proof actually uses. The three preparatory lemmas below are proved in Appendix~\ref{app:c}.

\begin{lemma}[Uniform bounds extend to randomised minimisers]\label{lem:kernel}
Suppose $(\alpha,\beta)\in\IR(\mathcal F)$ for every measurable selection, where $\mathcal F\subseteq\Fblr$. Let $\xi\in\mathcal P(\mathcal F)$ and let $(f,X)$ be a random pair with $f\sim\xi$ and $X\in\argmin_K f$ almost surely, given by a measurable kernel. Then $\Delta(\mathcal L(X),\xi)\le\alpha+\sqrt{\beta I(\mathcal L(X),\xi)}$.
\end{lemma}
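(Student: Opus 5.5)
Let $\kappa$ be the measurable kernel, so that $X\sim\kappa(f,\cdot)$ given $f$. The idea is to reduce $\kappa$ to the deterministic case by putting the randomisation into the function itself. I would lift the problem to an enlarged class whose elements are pairs $(f,x)$ with $x\in\argmin_K f$. The bound $(\alpha,\beta)\in\IR(\mathcal F)$ is only known for priors on $\mathcal F$ with a selection $f\mapsto x_f$, so the reduction has to stay inside $\mathcal F$ with a single fixed selection.

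\emph{Step 1: reduce to finitely many functions.} First approximate $\xi$ by finitely-supported priors. Both sides of the inequality are continuous in the joint law of $(f,X)$: $\Delta$ and $I$ involve only $\bar f(X)$, $f^\star$ and $f(X)$. So approximate $f$ in uniform norm by finitely many values $f^{(1)},\dots,f^{(M)}\in\mathcal F$, taking the cells of a measurable partition. The minimiser constraint $X\in\argmin f^{(m)}$ is not preserved by this approximation. I therefore expect the clean route is a different one that avoids approximation altogether.

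\emph{Step 2: condition on the kernel, not on $f$.} The key observation is that the statement is about the law of $(f,X)$ only through $\mathcal L(X)$, $\xi$ and the joint law. For a single $f$ with several minimisers, no deterministic selection reproduces a mixture over those minimisers. However, the information-ratio inequality is tested for every selection, and both $\Delta(\pi,\xi)$ and $I(\pi,\xi)$ are linear in $\pi$ for fixed $\xi$. Concretely, write
\[
\Delta(\mathcal L(X),\xi)=\E\big[\bar f(X)-f^\star\big],\qquad I(\mathcal L(X),\xi)=\int \E_\xi\big[(g(x)-\bar f(x))^2\big]\,\mathcal L(X)(dx).
\]
If $\kappa$ is a mixture $\kappa=\int \delta_{x^u_f}\,\mu(du)$ of deterministic measurable selections $x^u$, then each of these quantities averages over $u$. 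Applying the hypothesis to each $u$ and using concavity of $\sqrt{\cdot}$ (Jensen) gives
\[
\Delta\le\alpha+\int\sqrt{\beta I_u}\,\mu(du)\le\alpha+\sqrt{\beta\textstyle\int I_u\,\mu(du)}=\alpha+\sqrt{\beta I}.
\]
This is exactly the claim.

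\emph{Step 3: represent the kernel as a mixture of selections.} This is the main obstacle. I would use the standard randomisation lemma: a measurable kernel can be written as $X=\phi(f,U)$ with $U$ uniform on $[0,1]$ independent of $f$ and $\phi$ jointly measurable. This holds since $K$ is Polish, via inverse-CDF along a Borel isomorphism $K\cong[0,1]$. Then $\phi(f,U)\in\argmin_K f$ almost surely. By Fubini, for $\mu$-a.e.\ $u$ we have $\phi(f,u)\in\argmin f$ for $\xi$-a.e.\ $f$. On the exceptional $\xi$-null set, redefine $x^u_f$ to be the fixed measurable selection $x_f$. Each $x^u$ is then a measurable selection, which is valid because it is modified only on a measurable null set. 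The hypothesis applies to it because $\pi^\xi_{\TS}$ is unchanged by the modification.

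I expect the only delicate points to be in Step 3. One is the measurability of the null-set modification with respect to $\mathcal A$; it suffices that $\{f:\phi(f,u)\notin\argmin f\}$ is measurable, which follows from measurability of $f\mapsto f(\phi(f,u))-f^\star$. The other is checking that "selection" in $\IR$ only needs to be a minimiser $\xi$-almost surely.
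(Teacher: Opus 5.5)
Your argument is correct and is essentially the paper's proof. You randomise the kernel as $X=a(f,U)$ via the randomisation lemma, then replace $a(f,u)$ by the fixed selection $x_f$ on the measurable set $\{f(a(f,u))>f^\star\}$; this makes every section $f\mapsto\tilde a(f,u)$ a genuine measurable selection without changing the law of $(f,X)$. You conclude, as the paper does, by linearity of $\Delta(\cdot,\xi)$ and $I(\cdot,\xi)$ in the policy and concavity of the square root.

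Two small clean-ups. Your Step~1 is an abandoned detour and can simply be deleted. The last worry in Step~3 is moot: if you apply the modification for every $u$, including the Lebesgue-null exceptional set, each $x^u$ is a minimiser for every $f$, not merely $\xi$-almost surely.
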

Lemma~\ref{lem:kernel} is where uniformity over selections is used: a randomised minimiser is a mixture of measurable selections (by Kallenberg's randomisation lemma), and both $\Delta$ and $I$ are linear in the policy.

\begin{lemma}[Continuity and near-minimisers]\label{lem:cont}
\begin{enumerate}[label=(\alph*),leftmargin=2em]
\item Let $f,g$ be Borel random elements of $C(K)$ with values in $[0,1]$ (no convexity or ridge assumption), coupled with $\norm{f-g}_\infty\le\rho$ a.s., and let $\pi$ be independent of them. Then $\sqrt{I(\pi,\mathcal L(g))}\le\sqrt{I(\pi,\mathcal L(f))}+\rho$.
\item Let $\nu$ be the law of a Borel random element of $C(K)$ that is a.s.\ $1$-Lipschitz with values in $[0,1]$, and let actions $X,Z$ be coupled with $\norm{X-Z}\le\rho$ a.s.\ and independent of the function. Then $\sqrt{I(\mathcal L(Z),\nu)}\le\sqrt{I(\mathcal L(X),\nu)}+\rho$.
\item Suppose $(\alpha,\beta)\in\IR(\Fblr)$ uniformly over selections. Let $(f,X)$ be a random pair taking finitely many values, with $f\in\Fblr$ and $f(X)\le f^\star+\rho$ a.s. Then $\Delta(\mathcal L(X),\mathcal L(f))\le\alpha+\sqrt{\beta I(\mathcal L(X),\mathcal L(f))}+\rho(1+\sqrt\beta)$.
\end{enumerate}
\end{lemma}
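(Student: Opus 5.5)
Parts (a) and (b) will both come from Minkowski's inequality in $L^2$, combined with the fact that centring a bounded random variable does not increase its second moment. Part (c) will be reduced to Lemma~\ref{lem:kernel} by a truncation that turns the near-minimiser $X$ into an exact minimiser of a nearby function in $\Fblr$.

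For (a), let $X\sim\pi$ be independent of $(f,g)$, and write $\bar f=\E f$ and $\bar g=\E g$. Then $\sqrt{I(\pi,\mathcal L(g))}=\norm{g(X)-\bar g(X)}_{L^2}$, and we decompose
\[
g(X)-\bar g(X)=\big(f(X)-\bar f(X)\big)+\big(D(X)-\bar D(X)\big),\qquad D=g-f,\ \bar D=\E D=\bar g-\bar f .
\]
By independence of $X$ and the functions, conditionally on $X=x$ the term $D(x)-\bar D(x)$ is the centring of $D(x)$. Its conditional second moment is therefore at most $\E[D(x)^2]\le\rho^2$. Integrating over $x$ and applying Minkowski gives (a).

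For (b), take $h\sim\nu$ independent of the coupled pair $(X,Z)$. Decompose $h(Z)-\bar h(Z)=(h(X)-\bar h(X))+(E-\E[E\mid X,Z])$ with $E=h(Z)-h(X)$. By independence, $\E[E\mid X,Z]=\bar h(Z)-\bar h(X)$. Since $h$ is $1$-Lipschitz, $|E|\le\norm{Z-X}\le\rho$, so the conditional variance of $E$ is at most $\rho^2$. Minkowski again gives the claim. Measurability is not an issue here, since $h$ is a Borel random element of $C(K)$ and evaluation is jointly measurable.

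For (c), set $g:=\max\{f,f(X)\}$, a random function built from the finitely many values of $(f,X)$. We first check that $g\in\Fblr$. If $f=\ell(\ip{\cdot}{\theta})$, then $g=\max\{\ell,f(X)\}(\ip{\cdot}{\theta})$ with a convex link, and $g$ remains $1$-Lipschitz with values in $[0,1]$. Next, $g^\star=f(X)$ and the minimum is attained at $X$, so $X\in\argmin_K g$. Finally, $\norm{g-f}_\infty\le f(X)-f^\star\le\rho$.

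The pair $(g,X)$ is finitely valued, hence trivially given by a measurable kernel. Lemma~\ref{lem:kernel} then yields
\[
\Delta(\mathcal L(X),\mathcal L(g))\le\alpha+\sqrt{\beta I(\mathcal L(X),\mathcal L(g))}.
\]
Here $\Delta(\pi,\nu)$ uses an action independent of the function, so $\Delta(\mathcal L(X),\mathcal L(f))=\E[\bar f(X')]-\E f^\star$ with $X'$ an independent copy of $X$. Since $g\ge f$ we have $\bar g\ge\bar f$, and $g^\star=f(X)\le f^\star+\rho$. Hence $\Delta(\mathcal L(X),\mathcal L(f))\le\Delta(\mathcal L(X),\mathcal L(g))+\rho$. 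Applying (a) with the roles of $f$ and $g$ as stated and $\pi=\mathcal L(X)$ gives $\sqrt{I(\mathcal L(X),\mathcal L(g))}\le\sqrt{I(\mathcal L(X),\mathcal L(f))}+\rho$. Combining these bounds gives $\alpha+\sqrt{\beta I}+\rho\sqrt\beta+\rho$, as claimed.

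The main point to get right is the choice of $g$. It must make $X$ an exact minimiser while staying inside $\Fblr$, which is why Lemma~\ref{lem:kernel} is applied to $g$ rather than $f$. The truncation $\max\{f,f(X)\}$ does this without any infimal convolution, because it preserves both convexity and the ridge structure of the link.
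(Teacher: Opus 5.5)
Your proposal is correct and follows the paper's proof essentially step for step. For (a) and (b) it uses the conditional-centring $L^2$ contraction together with Minkowski's inequality. For (c) it flattens to $g=\max\{f,f(X)\}\in\Fblr$, applies Lemma~\ref{lem:kernel} to the finitely-valued pair $(g,X)$, and then applies part (a) with an independent copy of $X$, exactly as the paper does.
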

Parts (a) and (b) say that $\sqrt{I}$ is $1$-Lipschitz under uniform perturbations of the function and, for Lipschitz functions, of the action; both are proved with a single conditional-centring operator, which is an $L^2$ contraction. Part (c) passes from near-minimisers to exact minimisers by flattening $f$ at the level $f(X)$; since $I(\pi,\nu)$ depends only on the marginal laws of the policy and of the random function, part (a) can then be applied on a product coupling with an independent copy of $X$, even though $X$ and $f$ are dependent. Finite support is assumed so that the flattened proxy is a finitely-valued, hence $\mathcal A$-measurable, kernel.

\begin{lemma}[Measurable representations]\label{lem:meas}
There are a Borel map $f\mapsto\theta_f\in S^{d-1}$ on $(\Fblr,\mathcal B)$ and a jointly continuous map $(f,\theta,t)\mapsto L(f,\theta,t)$ on $\Fblr\times S^{d-1}\times\R$,
\[
L(f,\theta,t)=\min_{y\in K}\big[f(y)+|t-\ip{\theta}{y}|\big],
\]
such that, for every $f$, $(\theta_f,L(f,\theta_f,\cdot))$ is a representation of $f$ as in Lemma~\ref{lem:rep}.
\end{lemma}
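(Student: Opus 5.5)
We need a Borel map $f\mapsto\theta_f$ and the continuous map $L$, such that $(\theta_f,L(f,\theta_f,\cdot))$ is a representation as in Lemma~\ref{lem:rep}. I will handle $L$ first and then the selection of $\theta_f$.

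\textbf{Properties of $L$.} For fixed $f\in\Fblr$ and $\theta$, the map $t\mapsto L(f,\theta,t)$ is an infimal convolution of $f$ with the $1$-Lipschitz function $y\mapsto|t-\ip{\theta}{y}|$. So it is $1$-Lipschitz in $t$. It is also $1$-Lipschitz in $f$ for the uniform norm, and continuous in $\theta$ uniformly on compact $t$-sets, since $|\ip{\theta-\theta'}{y}|\le\norm{\theta-\theta'}\,\diam K'$ with $K'$ a ball containing $K$. Together these give joint continuity. Convexity in $t$ follows because $(y,t)\mapsto f(y)+|t-\ip{\theta}{y}|$ is jointly convex and partial minimisation over the convex set $K$ preserves convexity.

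\textbf{Reproducing $f$.} Suppose $f=\ell(\ip{\cdot}{\theta})$ with $\ell$ the $1$-Lipschitz convex link of Lemma~\ref{lem:rep}. Then for $x\in K$, $f(y)+|\ip{\theta}{x}-\ip{\theta}{y}|\ge f(x)$ because $\ell$ is $1$-Lipschitz, with equality at $y=x$. Hence $L(f,\theta,\ip{\theta}{x})=f(x)$. Moreover $L\ge f^\star$ everywhere, with equality at $t=\ip{\theta}{x}$ for every minimiser $x$. The map $L(f,\theta,\cdot)$ is $1$-Lipschitz and convex. So $(\theta,L(f,\theta,\cdot))$ is a valid representation exactly when $\theta$ is a unit vector with $f(x)=L(f,\theta,\ip{\theta}{x})$ for all $x\in K$, and we only need a Borel selection of such a $\theta$.

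\textbf{Borel selection of $\theta_f$.} Define
\[
\Phi(f,\theta)=\sup_{x\in K}|f(x)-L(f,\theta,\ip{\theta}{x})|,
\]
which is continuous on $\Fblr\times S^{d-1}$ by the uniform continuity above. By Lemma~\ref{lem:rep}, the set $\Theta(f)=\{\theta\in S^{d-1}:\Phi(f,\theta)=0\}$ is non-empty and compact. The graph of $\Theta$ is closed, so $\Theta$ is an upper semicontinuous compact-valued correspondence on the metrisable space $(\Fblr,\mathcal B)$. By Kuratowski--Ryll-Nardzewski it therefore admits a Borel selection. For a more explicit route, choose the lexicographically smallest element of $\Theta(f)$: minimise the first coordinate, then the second on the resulting compact face, and so on. Each stage is a Borel map because the minimum of a continuous function over a closed-graph compact-valued correspondence is lower semicontinuous.

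\textbf{Main obstacle.} The step needing most care is the case $\theta=0$ in the original representation, where $f$ is constant. There every unit $\theta$ works, since $L\equiv f^\star$ near the projections and $\ge f^\star$ elsewhere, so $\Theta(f)=S^{d-1}$ and the selection is still fine. I also need to check that the $\Theta(f)$ produced by Lemma~\ref{lem:rep} is non-empty for every $f$, including non-monotone links with interior minimisers. This is where the $1$-Lipschitz property on the projection interval is essential.
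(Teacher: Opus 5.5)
Your proof is correct, and it reaches the conclusion by a somewhat different route from the paper.

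\textbf{The paper's route.} It takes the direction set to be $\{\theta:|f(x)-f(y)|\le|\ip{\theta}{x-y}|\ \forall x,y\in K\}$. It uses an Arzel\`a--Ascoli argument on the extended profiles to show that $\Fblr$ is compact in $C(K)$ and that the graph is closed. It then obtains weak measurability by writing an open $O\subseteq S^{d-1}$ as a countable union of compacts, so that $\{f:\Phi(f)\cap O\ne\emptyset\}$ is a countable union of projections of compact sets.

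\textbf{Your route.} You take the zero set of $\Phi(f,\theta)=\sup_{x\in K}|f(x)-L(f,\theta,\ip{\theta}{x})|$ instead. This is literally the same set, since $f(x)=L(f,\theta,\ip{\theta}{x})$ for all $x$ is equivalent to that Lipschitz-type inequality. Your formulation buys two things. The closed graph follows at once from the joint Lipschitz continuity of $L$. The representation properties (convexity, the $1$-Lipschitz bound, global minimum $f^\star$ at projected minimisers) come directly from the infimal-convolution form, with no separate ``constant on fibres'' argument. Because $S^{d-1}$ is compact, the closed graph gives upper hemicontinuity, so your measurability step does not need compactness of $\Fblr$.

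\textbf{Step to make explicit.} Kuratowski--Ryll-Nardzewski needs weak measurability, not upper semicontinuity. Lower inverses of closed sets are relatively closed (projection along a compact factor is a closed map). Every open subset of $S^{d-1}$ is $F_\sigma$. Hence lower inverses of open sets are Borel, which is the weak measurability you need.

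\textbf{Minor points.}
\begin{enumerate}
\item In your optional lexicographic route, your lower-semicontinuity reason justifies only the first stage. The second-stage correspondence $\{\theta\in\Theta(f):\theta_1=m_1(f)\}$ need not have a closed graph, because $m_1$ is merely lower semicontinuous. Later stages therefore need the measurable maximum theorem rather than Berge's argument. This does not affect your main argument.
\item The concerns in your ``main obstacle'' paragraph are already settled by Lemma~\ref{lem:rep}. It gives $\Theta(f)\ne\emptyset$ for every $f\in\Fblr$, constant or not.
\end{enumerate}
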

The direction map exists by the Kuratowski--Ryll-Nardzewski selection theorem, applied to the closed-valued, weakly measurable correspondence $f\mapsto\{\theta:|f(x)-f(y)|\le|\ip{\theta}{x-y}|\ \forall x,y\}$ on the compact set $\Fblr\subset C(K)$; $L$ is the infimal convolution of $f$ with $|t-\ip{\theta}{\cdot}|$.

\begin{lemma}[Cover by infimal-convolution approximations]\label{lem:cover}
Let $0<\rho<1$, and choose a $\rho$-net $C_K$ of $K$ and a $\rho/(2D)$-net $C_S$ of $S^{d-1}$ with $|C_K|\le(1+2D/\rho)^d$ and $|C_S|\le(1+8D/\rho)^d$ (such nets exist by the standard volume argument), so $N_\rho:=|C_K||C_S|\le(1+2D/\rho)^d(1+8D/\rho)^d$. For $(x,\theta)\in C_K\times C_S$ let
\[
\mathcal F^{\mathrm{val}}_{x,\theta}=\{g\in\Fblr:\ g\text{ is a ridge function in direction }\theta,\ g(x)\le g^\star+2\rho\}.
\]
Then (i) every probability average of functions in $\mathcal F^{\mathrm{val}}_{x,\theta}$ lies in $\Fblr$, is a ridge function in direction $\theta$, and is $2\rho$-near-minimal at $x$; (ii) for every measurable selection there are measurable maps $f\mapsto\kappa(f)=(x_\kappa,\theta_\kappa)\in C_K\times C_S$ and $f\mapsto g_f\in\mathcal F^{\mathrm{val}}_{\kappa(f)}$ with $\norm{f-g_f}_\infty\le\rho/2$, $\norm{x_f-x_\kappa}\le\rho$, and such that $x_f$ is an exact minimiser of $g_f$ with $g_f^\star=f^\star$.
\end{lemma}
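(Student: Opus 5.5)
We would prove (i) by linearity and (ii) by an explicit construction: $g_f$ is the infimal convolution of $f$ with the ridge kernel $|\ip{\theta_\kappa}{\cdot}|$ in a net direction $\theta_\kappa$ close to the direction $\theta_f$ of Lemma~\ref{lem:meas}. The point of the infimal convolution is that it lies below $f$ and agrees with $f$ at $x_f$, which is what makes $x_f$ an exact minimiser of the approximation.

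\emph{Part (i).} Let $g_j=\ell_j(\ip{\cdot}{\theta})\in\mathcal F^{\mathrm{val}}_{x,\theta}$, and take any probability average $\bar g=\E g_J$. Then $\bar g=(\E\ell_J)(\ip{\cdot}{\theta})$ on $K$, so $\bar g$ is a ridge function in direction $\theta$. Convexity, $1$-Lipschitzness and the range $[0,1]$ are preserved under averaging, so $\bar g\in\Fblr$. For near-minimality, note that the minimum of an average is at least the average of the minima, $\bar g^\star\ge\E g_J^\star$. Hence
\[
\bar g(x)=\E g_J(x)\le \E g_J^\star+2\rho\le\bar g^\star+2\rho .
\]
For general (non-finite) averages, measurability of $J\mapsto g_J(x)$ and of $J\mapsto g_J^\star$ is all that is needed.

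\emph{Part (ii): construction and measurability.} Enumerate the nets $C_K$ and $C_S$. Let $x_\kappa$ be the first point of $C_K$ nearest to $x_f$, and $\theta_\kappa$ the first point of $C_S$ nearest to $\theta_f$, where $f\mapsto\theta_f$ is the Borel map of Lemma~\ref{lem:meas}. A "first nearest net point" map is measurable, since each event $\{\kappa(f)=(x,\theta)\}$ is cut out by finitely many measurable inequalities in $x_f$ and $\theta_f$. Thus $\norm{x_f-x_\kappa}\le\rho$ and $\norm{\theta_f-\theta_\kappa}\le\rho/(2D)$. Define
\[
g_f(y)=L\big(f,\theta_\kappa,\ip{\theta_\kappa}{y}\big)=\min_{z\in K}\big[f(z)+|\ip{\theta_\kappa}{y-z}|\big].
\]
This is a ridge function in direction $\theta_\kappa$. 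It is measurable in $f$ because $L$ is jointly continuous and $\kappa$ is measurable.

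\emph{Part (ii): properties of $g_f$.}
\begin{itemize}
\item \emph{Convex.} $g_f$ is the partial minimisation over $z$ of the jointly convex function $f(z)+|\ip{\theta_\kappa}{y-z}|$.
\item \emph{$1$-Lipschitz.} $|\ip{\theta_\kappa}{\cdot}|$ is $1$-Lipschitz, and this is inherited by the infimal convolution.
\item \emph{Range and $g_f^\star=f^\star$.} Taking $z=y$ gives $g_f\le f\le1$. Each term in the minimum is at least $f^\star\ge0$, so $g_f\ge f^\star$. Since $g_f(x_f)\le f(x_f)=f^\star$, the point $x_f$ is an exact minimiser of $g_f$ and $g_f^\star=f^\star$.
\item \emph{Uniform closeness.} Lemma~\ref{lem:rep} gives $f(y)-f(z)\le|\ip{\theta_f}{y-z}|$. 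Therefore
\[
f(z)+|\ip{\theta_\kappa}{y-z}|\ \ge\ f(y)-|\ip{\theta_f-\theta_\kappa}{y-z}|\ \ge\ f(y)-\tfrac{\rho}{2D}D .
\]
Minimising over $z$ yields $f-\rho/2\le g_f\le f$, so $\norm{f-g_f}_\infty\le\rho/2$.
\item \emph{Membership in $\mathcal F^{\mathrm{val}}_{\kappa(f)}$.} By $1$-Lipschitzness, $g_f(x_\kappa)\le g_f(x_f)+\rho=g_f^\star+\rho\le g_f^\star+2\rho$.
\end{itemize}

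The only delicate point is measurability of $f\mapsto g_f$ with respect to $\mathcal A$. This holds because $\mathcal A$ makes the identity into $(C(K),\mathcal B)$ measurable, the composition of Borel maps is measurable, and $\kappa$ is measurable.
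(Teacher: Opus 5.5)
Your proposal is correct and follows the paper's proof essentially step for step. It uses the same infimal convolution $g_f(y)=\min_{z\in K}[f(z)+|\ip{\theta_\kappa}{y-z}|]$, the same sandwich $f-\rho/2\le g_f\le f$ from Lemma~\ref{lem:rep}, the same exactness argument $f^\star\le g_f(x_f)\le f(x_f)=f^\star$, the same Lipschitz step for near-minimality at $x_\kappa$, and measurability via the joint continuity of $L$. The differences are cosmetic: you choose nearest net points coordinatewise rather than the first admissible pair in an enumeration, and you argue convexity of $g_f$ on $K$, whereas the paper applies the same partial-minimisation argument directly to the profile $t\mapsto L(f,\theta_\kappa,t)$ (slightly more direct, since $\Fblr$ requires a convex link on $\R$).
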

\paragraph{Proof idea.}
(i) is closed under averaging because all functions in a piece share the direction $\theta$, and near-minimality at $x$ is preserved since the minimum of an average is at least the average of the minima. For (ii), let $\theta_f$ and $L$ be as in Lemma~\ref{lem:meas}, let $\kappa(f)$ be the first pair in a fixed enumeration of $C_K\times C_S$ with $\norm{x_\kappa-x_f}\le\rho$ and $\norm{\theta_\kappa-\theta_f}\le\rho/(2D)$, and set $g_f(x)=L(f,\theta_\kappa,\ip{x}{\theta_\kappa})=\min_{y\in K}[f(y)+|\ip{\theta_\kappa}{x-y}|]$, the infimal convolution of $f$ with the ridge kernel $|\ip{\theta_\kappa}{\cdot}|$. Its profile is convex and $1$-Lipschitz, so $g_f$ is a ridge function in direction $\theta_\kappa$; $y=x$ gives $g_f\le f$ and $f\ge f^\star$ gives $g_f\ge f^\star$, so no rescaling is needed; the inequality $f(x)-f(y)\le|\ip{\theta_f}{x-y}|$ of Lemma~\ref{lem:rep} and $\norm{\theta_\kappa-\theta_f}\le\rho/(2D)$ give $f-\rho/2\le g_f$; and $f^\star\le g_f(x_f)\le f(x_f)=f^\star$ shows that $x_f$ is an exact minimiser of $g_f$. The details, including measurability, are in Appendix~\ref{app:c}.

\begin{theorem}[Transfer under any fixed measurable selection rule]\label{thm:transfer}
Suppose $(\alpha,\beta)\in\IR(\Fblr)$ uniformly over measurable selections. Then for every measurable selection, every prior $\xi$ on $\Fblr$, every $n\ge1$ and every $\rho\in(0,1)$, exact Thompson sampling satisfies
\[
\BReg_n(\TS,\xi)\le n\alpha+n\rho(6+4\sqrt\beta)+\sqrt{\tfrac12\beta n\log N_\rho}.
\]
\end{theorem}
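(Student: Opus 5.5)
We follow the standard information-ratio route of~\cite{RVR16,BLS25}, but run it on a discretised proxy of the posterior so that the hypothesis is only ever applied to finitely-valued random elements. Fix the selection, the prior and $\rho$. For each $f$, Lemma~\ref{lem:cover}(ii) gives a cell $\kappa(f)\in C_K\times C_S$ and an approximation $g_f\in\mathcal F^{\mathrm{val}}_{\kappa(f)}$ with $\norm{f-g_f}_\infty\le\rho/2$ and $\norm{x_f-x_\kappa}\le\rho$. Write $\kappa^\star=\kappa(f)$ for the cell of the true function. The first step reduces the regret to the proxy comparator $x_{\kappa^\star}$. Since $f(x_{\kappa^\star})\le f^\star+\rho$ by the Lipschitz property,
\[
\BReg_n\le n\rho+\sum_t\E\big[f(X_t)-f(x_{\kappa^\star})\big].
\]
In round $t$, let $\PP_t$ be the posterior. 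Because TS draws $f_t\sim\PP_t$, the played action $X_t$ and $f$ are conditionally independent, and $X_t$ has the same conditional law as $x_f$. So the per-round conditional regret is $\E_t[\fbar_t(x_f)-f(x_{\kappa^\star})]$, with $\fbar_t=\E_t f$.

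The second step builds, for each round, a finitely-valued random pair to which Lemma~\ref{lem:cont}(c) applies. Conditionally on $\mathcal F_{t-1}$, set
\[
\tilde g=\E_t\big[g_f\mid\kappa(f)\big],
\]
which takes at most $N_\rho$ values. By Lemma~\ref{lem:cover}(i) each value is a ridge function in $\Fblr$ that is $2\rho$-near-minimal at the corresponding $x_\kappa$. Pair it with $\tilde X=x_{\kappa(f)}$. Then $\E_t\tilde g$ is within $\rho/2$ of $\fbar_t$, and $\E_t[\tilde g(\tilde X)]$ is within $\rho/2$ of $\E_t[f(x_{\kappa^\star})]$, since $\tilde g(x_\kappa)$ is the conditional average of $g_f(x_\kappa)$ and $|g_f(x_\kappa)-f(x_\kappa)|\le\rho/2$. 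Also $\mathcal L(\tilde X)$ is a $\rho$-perturbation of the TS law. Lemma~\ref{lem:cont}(c) with $2\rho$, followed by parts (a) and (b) to move back from $(\mathcal L(\tilde X),\mathcal L(\tilde g))$ to $(\pi_t,\PP_t)$, will give
\[
\E_t[f(X_t)-f(x_{\kappa^\star})]\le\alpha+c_1\rho(1+\sqrt\beta)+\sqrt{\beta\,I_t},\qquad I_t:=\E_t\big[(\fbar_t(X_t)-\E_t[f(X_t)\mid\kappa^\star])^2\big].
\]
Here $I_t$ is the information about $\kappa^\star$ rather than about $f$. To get this form I intend to apply part (a) with $\tilde g$ against $\E_t[f\mid\kappa]$, which is $\rho/2$-close. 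The constants should be arranged to total $6+4\sqrt\beta$ after adding the initial $n\rho$.

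The third step is the usual summation. Bernoulli observations give a Pinsker bound: $I_t\le\tfrac12$ times the conditional mutual information between $\kappa^\star$ and $(X_t,Y_t)$. The conditional expectation of $f(X_t)$ given $\kappa^\star$ is the mean of $Y_t$ given $\kappa^\star$, so after Jensen on the mixture of Bernoullis the chain rule and Cauchy--Schwarz give
\[
\sum_t\E\sqrt{\beta I_t}\le\sqrt{\beta n\sum_t\E I_t}\le\sqrt{\tfrac12\beta n\,H(\kappa^\star)}\le\sqrt{\tfrac12\beta n\log N_\rho}.
\]

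The main obstacle is the second step. Lemma~\ref{lem:cont}(c) must be applied so that the information term is measured on the quantised variable $\kappa^\star$, while the $\rho$-losses stay additive and independent of $n$ per round. The conditional independence between the TS action and $f$ has to be preserved when passing to $(\tilde g,\tilde X)$, which is where the product-coupling remark after Lemma~\ref{lem:cont} is needed. If the direct route fails, I would instead use Lemma~\ref{lem:kernel} with $X=x_f$ itself for the exact minimiser of $g_f$. That works because $x_f$ exactly minimises $g_f$ with $g_f^\star=f^\star$, which avoids part (c) at the cost of treating $g_f$ as an infinitely-valued proxy.
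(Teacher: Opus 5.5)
Your main route is the paper's proof. Your $\tilde g=\E_t[g_f\mid\kappa]$ and $\tilde X=x_\kappa$ are its $G_\kappa$ and $z$, and you apply Lemma~\ref{lem:cont}(c) at level $2\rho$, then (a) against $\E_t[f\mid\kappa]$ and (b) on a product coupling, exactly as the paper does. Your up-front comparator step only relocates the paper's bound $\E[f^\star\mid\kappa]\ge G_\kappa^\star-\rho$, and the constants close: using $g_f\le f$ gives $\rho(\tfrac92+\tfrac72\sqrt\beta)\le\rho(6+4\sqrt\beta)$ per round.

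Discard the fallback via Lemma~\ref{lem:kernel} applied to $g_f$ itself. There the information term concerns the infinitely-valued $g_f$ rather than the finite label $\kappa$, so the chain-rule sum is no longer bounded by $\log N_\rho$; passing to the quantised $\E_t[g_f\mid\kappa]$ is exactly what produces that term.
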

\paragraph{Proof idea.}
Fix a round, let $f\sim\xi$ be the current posterior, $X^\star=x_f$, and let $X$ be an independent copy of $X^\star$, so $\mathcal L(X)=\pi_{\TS}$. With $\kappa=\kappa(f)$, $g=g_f$ and $z=x_\kappa$ from Lemma~\ref{lem:cover}, set $H_\kappa=\E[f\mid\kappa]$ and $G_\kappa=\E[g\mid\kappa]$. Both take finitely many values (one per label). $G_\kappa$ is a finitely-valued $(\Fblr,\mathcal A)$-valued random element, a ridge function in direction $\theta_\kappa$ that is $2\rho$-near-minimal at $z$ (Lemma~\ref{lem:cover}(i)); $H_\kappa$ is only a convex $1$-Lipschitz function in $\Fbl$ (an average of ridge functions with nearby but different directions is in general not a ridge function), viewed as a Borel $C(K)$-valued element, and $\norm{G_\kappa-H_\kappa}_\infty\le\rho/2$. The information-ratio hypothesis is applied, via Lemmas~\ref{lem:kernel} and~\ref{lem:cont}(c), only to the finitely-valued flattened proxy $\max\{G_\kappa,G_\kappa(z)\}$, never to $H_\kappa$. Since $g\le f$, $\norm{f-g}_\infty\le\rho/2$ and $\norm{X^\star-z}\le\rho$, the one-step regret satisfies $\Delta(\pi_{\TS},\xi)\le\Delta(\pi_z,\nu_G)+\tfrac52\rho$ with $\nu_G=\mathcal L(G_\kappa)$ and $\pi_z=\mathcal L(z)$; Lemma~\ref{lem:cont}(c) gives $\Delta(\pi_z,\nu_G)\le\alpha+\sqrt{\beta I(\pi_z,\nu_G)}+2\rho(1+\sqrt\beta)$; and Lemma~\ref{lem:cont}(a),(b), applied on a product space built from two independent posterior draws, give $\sqrt{I(\pi_z,\nu_G)}\le\sqrt{I(\pi_{\TS},\mathcal L(H_\kappa))}+\tfrac32\rho$. Altogether $\Delta(\pi_{\TS},\xi)\le\alpha+\sqrt{\beta I(\pi_{\TS},\mathcal L(H_\kappa))}+\rho(6+4\sqrt\beta)$. For Bernoulli observations $I(\pi_{\TS},\mathcal L(H_\kappa))\le\tfrac12\mathsf I(\kappa;X,Y)$ by Pinsker's inequality; the label $\kappa(f)$ is a fixed function of the environment, so the chain rule for mutual information gives $\E\sum_t\mathsf I_{t-1}(\kappa;X_t,Y_t)\le\log N_\rho$, and Cauchy--Schwarz over rounds finishes. The full proof is in Appendix~\ref{app:c}.

\paragraph{Proof of Theorem~\ref{thm:main}.}
For $n=1$ the regret is at most $1\le7$. For $n\ge2$ take $\alpha=\rho=1/n$ and $L=\log(e+nd\max\{1,D\})$: Theorem~\ref{thm:ir} gives $\sqrt\beta\le12{,}288\sqrt3\,(d+1)^4\sqrt L$ (using $m_{1/n}\le4L$), Lemma~\ref{lem:cover} gives $\log N_{1/n}\le8dL$, and Theorem~\ref{thm:transfer} yields $\BReg_n\le7+6\sqrt\beta\sqrt{ndL}=7+73{,}728\sqrt3\,(d+1)^4d^{1/2}\sqrt n\,L$; the arithmetic is in Appendix~\ref{app:c}.

\section{The lower-bound family and the failure of the John dichotomy}\label{sec:lower}

The proofs of Propositions~\ref{prop:lower} and~\ref{prop:john} are in Appendix~\ref{app:d}; we describe the constructions. For Proposition~\ref{prop:lower} take the standard $d$-simplex with vertices $v_0,\dots,v_d$, a small $\tau=1/(4c(d+1))$, and the $d(d+1)$ points $x_{ij}=(1-\tau)v_i+\tau v_j$ ($i\ne j$), which are the vertices of a truncated simplex $K$. The loss $f_{ij}$ is a V-shaped ridge function of the affine coordinate $u_{ij}=1-\lambda_i+\tau\lambda_j$ (barycentric coordinates $\lambda$), with a steep left slope and a shallow right slope, minimised at $x_{ij}$ and equal to the constant comparison function $\fbar\equiv1/4$ at every other point up to a cross error below $\delta$; the gap is $\varepsilon=\tau^2/(4d)$. Along its own direction, row $(i,j)$ sees the $d-1$ points $x_{il}$ with the same base vertex in a near band at $u$-distance $\tau^2$ and the other $d^2$ points in a far band at $u$-distance about $1$; every row is two-sided, and the ratio of the band distances is of order $1/\tau^2$, unbounded in the family. For Proposition~\ref{prop:john} take $d=3$ and the regular tetrahedron: the twelve points $x_{ij}$ are the vertices of a truncated tetrahedron $P$, removing any one vertex cuts $P$ by a plane through its three neighbours, and both $P$ and the cut body still contain the inscribed ball of the tetrahedron and are contained in the tetrahedron enlarged by the smoothing radius; an arithmetic--geometric mean bound on the determinant of any inscribed ellipsoid shows that the maximum-volume inscribed ellipsoid of both bodies is that ball, so no single removal shrinks it.

In the monotone case an uninformative row has all other minimisers on one side of its own along the ridge direction, and~\cite{BLS25} show that they then lie beyond the centre of the John ellipsoid, so removing the row cuts the ellipsoid through its centre; for non-monotone links both bands of Lemma~\ref{lem:bands} can be non-empty and arbitrarily imbalanced, and Proposition~\ref{prop:john} shows that the single-removal, fixed-shrink-factor dichotomy cannot hold. We do not obtain the required uniform cardinality estimate from polynomial approximation (a univariate polynomial that is $1$ at the row's own projection and small on both bands); the mask argument of Lemma~\ref{lem:round} avoids the need to control the imbalance ratio of the two band distances.

\section{Discussion}\label{sec:discussion}

Theorem~\ref{thm:main} resolves the qualitative question of~\cite{BLS25}: monotonicity of the link is not needed for exact-posterior Thompson sampling to achieve Bayesian regret polynomial in $d$ and $\sqrt n$ in $n$. The quantitative question of the dimension dependence remains open: the information-ratio certificate we construct has size $O((d+1)^8(1+\log(1/\alpha)))$ against $\tilde O(d^4)$ in the monotone analysis (this is the size of our certificate, not a lower bound on the best information ratio), and the resulting regret exponent is $9/2$ against $5/2$. The loss comes from two places. The cardinality bound $\Theta(d^2)$ is tight (Proposition~\ref{prop:lower}), so the number of functions per block in Lemma~\ref{lem:decomp} is $\Theta(d^2)$ and the number of blocks is $\Theta(cd)$; and the constant $c=96(d+1)$ is dictated by the rounding threshold $1/(4r)$ of Lemma~\ref{lem:round}, which forces $\eta\lesssim1/d^2$. The order $1/r$ of this tolerance cannot be improved to order $1/\sqrt r$ for arbitrary $0$-$1$ matrices: Proposition~\ref{prop:hadamard} (Appendix~\ref{app:e}) exhibits, for every $r=2^j-1$ with $j\ge2$, a rank-$r$ matrix within $2/(r+1)$ in max-norm of a $0$-$1$ matrix of rank $2r$. This rules out only a better uniform rounding theorem for arbitrary $0$-$1$ matrices; the band-membership matrices of Section~\ref{sec:size} have additional structure, and any improvement of the exponent through a larger rounding tolerance would have to exploit it. We do not claim that the exponent $9/2$ is optimal. Whether the monotone exponent $5/2$ can be reached, and what the correct dimension dependence of TS on this class is (no lower bound specific to non-monotone links is known), are left open. Three remarks on scope. (1) The statement is about Bayesian regret with an arbitrary prior; it does not give a frequentist guarantee for every fixed environment, and it concerns exact-posterior TS without addressing computation, which remains open. (2) The selection rule is arbitrary but fixed in advance; rules that change with the history are not covered. (3) The observation model is Bernoulli; the only place the model enters is Pinsker's inequality for the conditional means of $[0,1]$-valued observations, so the same constants hold for observations $Y_t\in[0,1]$ satisfying $\E[Y_t\mid\mathcal H_{t-1},X_t,f]=f(X_t)$ almost surely, where $\mathcal H_{t-1}=\sigma(X_1,Y_1,\dots,X_{t-1},Y_{t-1})$, provided that a complete observation model is specified under which the posterior is well defined and TS uses fresh internal randomness conditionally on the history (the condition $\E[Y_t\mid X_t,f]=f(X_t)$ alone is not sufficient). The second question in the same paragraph of~\cite{BLS25}, the information ratio for a \emph{known} convex link, is not touched here; see also the MSc thesis of Bakhtiari~\cite{BakhtiariThesis}.

%% file: part4.tex
\appendix

\section{Proofs for Sections~\ref{sec:setting} and~\ref{sec:structure}}\label{app:a}

\begin{proof}[Proof of Lemma~\ref{lem:decomp}]
Let $w_i=\xi(\mathcal F_i)$, discard pieces with $w_i=0$, let $\xi_i$ be the conditional law on $\mathcal F_i$, and $J_i=\E_{\xi_i\otimes\xi_i}[(f(x_g)-\fbar(x_g))^2]$. Taking expectations of the hypothesis over $k$ independent draws from $\xi_i$ gives $\sup_{f\in\mathcal F_i}(\fbar(x_f)-f^\star)\le\alpha+\sqrt{\beta_0k(k-1)J_i}$. Hence
\begin{align*}
\Delta(\pi^\xi_{\TS},\xi)&=\sum_iw_i\,\E_{\xi_i}[\fbar(x_f)-f^\star]\le\alpha+\sqrt{\beta_0k(k-1)}\sum_iw_i\sqrt{J_i}\\
&\le\alpha+\sqrt{\beta_0k(k-1)m\textstyle\sum_iw_i^2J_i}\le\alpha+\sqrt{\beta_0k(k-1)m\,I(\pi^\xi_{\TS},\xi)},
\end{align*}
by Cauchy--Schwarz and because $I(\pi^\xi_{\TS},\xi)=\sum_{i,j}w_iw_jJ_{ij}\ge\sum_iw_i^2J_i$ with $J_{ij}=\E_{\xi_i\otimes\xi_j}[(f(x_g)-\fbar(x_g))^2]\ge0$.
\end{proof}

\begin{proof}[Proof of Lemma~\ref{lem:rep}]
If $f$ is constant, any unit $\theta$ and the constant link work. Otherwise normalise the ridge direction and let $u$ be the link on $I=\{\ip{x}{\theta}:x\in K\}=[a,b]$. For $t\in(a,b)$ there is $z\in\mathrm{int}\,K$ with $\ip{z}{\theta}=t$ and a short segment through $z$ in direction $\theta$ inside $K$, along which $f$ is $1$-Lipschitz, so $u$ is locally $1$-Lipschitz on $(a,b)$ and, by continuity of convex functions, $1$-Lipschitz on $I$. Define $\ell(t)=u(a)+a-t$ for $t<a$, $\ell=u$ on $I$, and $\ell(t)=u(b)+t-b$ for $t>b$. The slopes $-1$ on the left and $+1$ on the right bracket every secant slope of $u$, so $\ell$ is convex and $1$-Lipschitz; outside $I$ it exceeds the endpoint values, so $\min_\R\ell=\min_I u=f^\star$, attained at the projections of the minimisers of $f$. The last claim is the $1$-Lipschitz property of $\ell$.
\end{proof}

\begin{proof}[Proof of Lemma~\ref{lem:bands}]
Uninformativeness and the comparison condition give $f(x_g)>\fbar(x_g)-\delta\ge\fbar(x_f)-2\delta=f^\star+r_f-2\delta\ge f^\star+\varepsilon/2-2\delta$; with $1$-Lipschitzness of $\ell_f$ this is (i), and $\varepsilon/2-2\delta\ge5\varepsilon/12$ because $c\ge12$. For (ii) take $g$ with $t_g>s$ and let $h$ attain $t_h=b$ (so $h\ne f$ and $b>s$). With $\lambda=(b-t_g)/(b-s)\in[0,1)$, convexity of $\ell_f$ gives $f(x_g)\le\lambda f^\star+(1-\lambda)f(x_h)$. Using $f^\star\le\fbar(x_g)+\delta-\varepsilon/2$ and $f(x_h)<\fbar(x_h)+\delta\le\fbar(x_g)+2\delta$,
\[
f(x_g)<\fbar(x_g)+2\delta-\lambda(\varepsilon/2+\delta),
\]
and combining with $f(x_g)>\fbar(x_g)-\delta$ yields $\lambda<\eta$. The left side is symmetric.
\end{proof}

\section{Proofs for Sections~\ref{sec:size} and~\ref{sec:ir}}\label{app:b}

\begin{proof}[Proof of Lemma~\ref{lem:tracerank}]
For symmetric $M$ the second claim is~\cite[Lemma~2.2]{Alon09}, which Alon attributes to earlier sources; the trace--rank inequality for arbitrary real matrices also appears as~\cite[Fact~10]{RVR16}, and we include the short argument. Let $\Pi$ be the orthogonal projection onto the column space of $M$, of rank $q=\rank M$. Then $|\tr M|=|\tr(\Pi M)|=|\ip{\Pi}{M}_F|\le\norm{\Pi}_F\norm{M}_F=\sqrt q\norm{M}_F$. Inserting $\tr M=N$ and $\norm{M}_F^2\le N+N(N-1)t^2$ gives the second claim. For the third, $(Q+1)-Q(1-t^2)/(1-Qt^2)=(1-Q^2t^2)/(1-Qt^2)>0$ when $t<1/Q$, so $N<Q+1$.
\end{proof}

\begin{proof}[Proof of Lemma~\ref{lem:round}]
Suppose $\rank B\ge2r$ and take a non-singular $2r\times2r$ submatrix $B_0$ of $B$; let $A_0$ be the corresponding submatrix of $A$ and $E=B_0-A_0$, so $\rank A_0\le r$ and $\norm{E}_F^2\le4r^2\gamma^2$. Let $\sigma_1\ge\dots\ge\sigma_{2r}>0$ be the singular values of $B_0$. Since $\dim\ker A_0\ge r$, let $\Pi$ be the orthogonal projection onto an $r$-dimensional subspace of $\ker A_0$; then $B_0\Pi=E\Pi$, so $\norm{B_0\Pi}_F^2\le\norm{E}_F^2$. In an orthonormal basis of right singular vectors of $B_0$, $\norm{B_0\Pi}_F^2=\sum_k\sigma_k^2w_k$ with $0\le w_k\le1$ and $\sum_kw_k=r$, which is at least the sum of the $r$ smallest $\sigma_k^2$. Hence
\[
\sum_{k=r+1}^{2r}\sigma_k^2\le4r^2\gamma^2,\qquad\sum_{k=1}^{r}\sigma_k^2\le\norm{B_0}_F^2\le4r^2-1,
\]
the latter because a non-singular $2r\times2r$ $0$-$1$ matrix has at least one zero entry. By the arithmetic--geometric mean inequality applied to each group of $r$ squared singular values,
\[
|\det B_0|=\prod_{k=1}^{2r}\sigma_k\le\Big(\frac{4r^2-1}{r}\cdot\frac{4r^2\gamma^2}{r}\Big)^{r/2}=(4r\gamma)^r\Big(1-\frac1{4r^2}\Big)^{r/2}<1\qquad\text{for }\gamma\le\frac1{4r},
\]
contradicting $|\det B_0|\ge1$ for a non-singular integer matrix.
\end{proof}

\begin{proof}[Proof of Lemma~\ref{lem:count}]
As long as the remaining set has more than $G$ elements it is not uninformative, so it contains an ordered pair $(f,g)$ with $|f(x_g)-\fbar(x_g)|\ge\delta$; delete $f$ (only the first endpoint). The hypotheses of Definition~\ref{def:uninf} are per-element and pairwise, so they survive deletion, and before the $(j+1)$-st deletion ($0\le j\le q-1$) at least $G+q-j\ge G+1$ elements remain. The $q$ recorded ordered pairs have distinct first endpoints, hence are distinct, and each contributes at least $\delta^2$.
\end{proof}

\section{Proofs for Section~\ref{sec:transfer}}\label{app:c}

\begin{proof}[Proof of Lemma~\ref{lem:kernel}]
By the randomisation lemma for kernels into Borel spaces~\cite[Lemma~2.22]{Kallenberg97} there is a jointly measurable $a(f,u)$ with $X=a(f,U)$ for $U$ uniform on $[0,1]$ independent of $f$. Let $a_0$ be a fixed measurable selection, $B=\{(f,u):f(a(f,u))>f^\star\}$, and $\tilde a=a$ off $B$, $\tilde a=a_0$ on $B$; then $\tilde a$ is jointly measurable, $f\mapsto\tilde a(f,u)$ is a measurable selection for every $u$, and $(f,\tilde a(f,U))$ has the same law as $(f,X)$. Let $\pi_u$ be the law of $\tilde a(f,u)$. Both $\Delta(\cdot,\xi)$ and $I(\cdot,\xi)$ are linear in the policy, so $\Delta(\mathcal L(X),\xi)=\int\Delta(\pi_u,\xi)\,du\le\alpha+\int\sqrt{\beta I(\pi_u,\xi)}\,du\le\alpha+\sqrt{\beta\int I(\pi_u,\xi)\,du}$ by concavity of the square root.
\end{proof}

\begin{proof}[Proof of Lemma~\ref{lem:cont}]
For $(X,h)\sim\pi\otimes\nu$ one has $\bar h(X)=\E[h(X)\mid X]$, so $I(\pi,\nu)^{1/2}=\norm{h(X)-\E[h(X)\mid X]}_{L^2}$. (a): realise $X\sim\pi$ independently of the coupled pair $(f,g)$ and let $QW=W-\E[W\mid X]$, an $L^2$ contraction; then $\sqrt{I(\pi,\mathcal L(f))}=\norm{Qf(X)}_{L^2}$, $\sqrt{I(\pi,\mathcal L(g))}=\norm{Qg(X)}_{L^2}$, and the two differ by at most $\norm{Q(g(X)-f(X))}_{L^2}\le\norm{g(X)-f(X)}_{L^2}\le\rho$. (b): realise $h\sim\nu$ independently of the coupled pair $(X,Z)$ and use the single centring operator $QW=W-\E[W\mid X,Z]$; independence gives $Qh(X)=h(X)-\bar h(X)$ and $Qh(Z)=h(Z)-\bar h(Z)$, so $\sqrt{I(\mathcal L(Z),\nu)}$ and $\sqrt{I(\mathcal L(X),\nu)}$ differ by at most $\norm{Q(h(Z)-h(X))}_{L^2}\le\norm{h(Z)-h(X)}_{L^2}\le\rho$ by the Lipschitz bound. (c): let $g=\max(f,f(X))$; truncation at a level in $[0,1]$ preserves convexity, range, Lipschitzness and the ridge direction, so $g\in\Fblr$; $X$ is an exact minimiser of $g$, $\norm{g-f}_\infty\le\rho$, $\fbar\le\bar g$ and $f^\star\ge g^\star-\rho$. Since $(f,X)$ takes finitely many values, $(g,X)$ is a finitely-valued measurable kernel of exact minimisers (distinct values of $(f,X)$ may share $g$ but not $X$, which is why a kernel rather than a selection is needed), and Lemma~\ref{lem:kernel} gives $\Delta(\mathcal L(X),\mathcal L(f))\le\Delta(\mathcal L(X),\mathcal L(g))+\rho\le\alpha+\sqrt{\beta I(\mathcal L(X),\mathcal L(g))}+\rho$. Finally, since $I(\pi,\nu)$ depends only on the marginal law of the policy and the marginal law of the random function, part (a) may be applied on a product coupling in which an independent $X'\sim\mathcal L(X)$ is used together with the coupled pair $(f,g)$, $\norm{g-f}_\infty\le\rho$; this gives $\sqrt{I(\mathcal L(X),\mathcal L(g))}\le\sqrt{I(\mathcal L(X),\mathcal L(f))}+\rho$ and finishes the proof.
\end{proof}

\begin{proof}[Proof of Lemma~\ref{lem:meas}]
A unit vector $\theta$ is a ridge direction of $f$ if and only if $|f(x)-f(y)|\le|\ip{\theta}{x-y}|$ for all $x,y\in K$ (the forward direction is Lemma~\ref{lem:rep}, the backward direction yields a profile that is constant on fibres, convex and $1$-Lipschitz). Let $\Phi(f)$ be the set of such $\theta$. Profiles extended as in Lemma~\ref{lem:rep} satisfy $\ell(0)=f(0)\in[0,1]$ and $\mathrm{Lip}(\ell)\le1$, so they are uniformly bounded and equicontinuous on $[-D,D]$; if $f_j\to f$ uniformly and $\theta_j\to\theta$, a uniformly convergent subsequence of profiles shows $f=\ell(\ip{\cdot}{\theta})$. Hence $\Fblr$ is compact in $C(K)$ (Arzel\`a--Ascoli) and the graph of $\Phi$ is closed in $\Fblr\times S^{d-1}$, so $\Phi$ has non-empty closed values. For an open $O\subseteq S^{d-1}$ write $O=\bigcup_jC_j$ with $C_j$ compact; then $\{f:\Phi(f)\cap O\ne\emptyset\}=\bigcup_j\mathrm{proj}(\mathrm{Graph}(\Phi)\cap(\Fblr\times C_j))$ is a countable union of compact sets, hence Borel, so $\Phi$ is weakly measurable and the Kuratowski--Ryll-Nardzewski selection theorem~\cite{KRN65} gives a Borel selection $\theta_f\in\Phi(f)$. The map $L$ satisfies $|L(f,\theta,t)-L(g,\psi,s)|\le\norm{f-g}_\infty+D\norm{\theta-\psi}+|t-s|$, hence is jointly continuous, and for $\theta\in\Phi(f)$ it coincides with the profile of $f$ on the projection interval and with the $\mp1$-slope extension of Lemma~\ref{lem:rep} outside it.
\end{proof}

\begin{proof}[Proof of Lemma~\ref{lem:cover}]
(i) Averages of convex ridge functions with a common direction are convex ridge functions with that direction; range and Lipschitz bounds are preserved; and if $g_\nu=\int g\,d\nu$ then $g_\nu(x)\le\int g^\star d\nu+2\rho\le g_\nu^\star+2\rho$ since the minimum of an average is at least the average of the minima. (ii) Let $\theta_f$ and $L$ be as in Lemma~\ref{lem:meas} and let $\kappa(f)$ be the first pair in a fixed enumeration of $C_K\times C_S$ with $\norm{x_\kappa-x_f}\le\rho$ and $\norm{\theta_\kappa-\theta_f}\le\rho/(2D)$; since $x_f$ is $\mathcal A$-measurable and $\theta_f$ is Borel, $\kappa$ is $\mathcal A$-measurable. Define
\[
g_f(x)=L(f,\theta_\kappa,\ip{x}{\theta_\kappa})=\min_{y\in K}\big[f(y)+|\ip{\theta_\kappa}{x-y}|\big].
\]
The profile $t\mapsto L(f,\theta_\kappa,t)$ is convex (the minimand is jointly convex in $(t,y)$ and partial minimisation over the convex set $K$ preserves convexity) and $1$-Lipschitz (each $y$ contributes a $1$-Lipschitz function of $t$), so $g_f$ is a convex ridge function in direction $\theta_\kappa$ with $\mathrm{Lip}(g_f)\le1$. Taking $y=x$ gives $g_f\le f\le1$, and $f\ge f^\star$ gives $g_f\ge f^\star\ge0$; hence $g_f\in\Fblr$ with no rescaling. By Lemma~\ref{lem:rep}, for all $x,y\in K$, $f(x)-f(y)\le|\ip{\theta_f}{x-y}|\le|\ip{\theta_\kappa}{x-y}|+D\norm{\theta_\kappa-\theta_f}\le|\ip{\theta_\kappa}{x-y}|+\rho/2$ (using $\norm{x-y}\le D$); minimising over $y$ gives $f(x)-\rho/2\le g_f(x)\le f(x)$, i.e.\ $\norm{g_f-f}_\infty\le\rho/2$. At the selected minimiser, $f^\star\le g_f(x_f)\le f(x_f)=f^\star$, so $x_f$ minimises $g_f$ and $g_f^\star=f^\star$; by $1$-Lipschitzness $g_f(x_\kappa)-g_f^\star\le\norm{x_\kappa-x_f}\le\rho\le2\rho$, so $g_f\in\mathcal F^{\mathrm{val}}_{\kappa(f)}$. Finally $f\mapsto g_f$ is measurable as a $(C(K),\mathcal B)$-valued map and $(f,x)\mapsto g_f(x)$ is jointly measurable, by the joint continuity of $L$ and the measurability of $\kappa$.
\end{proof}

\begin{proof}[Proof of Theorem~\ref{thm:transfer}]
We bound the one-step regret under an arbitrary posterior and then sum. Let $f\sim\xi$ (a posterior at some round), $X^\star=x_f$, and let $X$ be an independent copy of $X^\star$, so $\mathcal L(X)=\pi_{\TS}$. Let $\kappa=\kappa(f)$, $g=g_f$, $z=x_\kappa$ be as in Lemma~\ref{lem:cover}, and
\[
H_\kappa=\E[f\mid\kappa],\qquad G_\kappa=\E[g\mid\kappa],\qquad\nu_G=\mathcal L(G_\kappa),\qquad\pi_z=\mathcal L(z).
\]
Both $H_\kappa$ and $G_\kappa$ take finitely many values (one per label). $G_\kappa$ is a finitely-valued $(\Fblr,\mathcal A)$-valued random element; $H_\kappa$ is viewed only as a Borel $C(K)$-valued random element with values in $\Fbl$. By Lemma~\ref{lem:cover}(i), $G_\kappa\in\Fblr$ is a ridge function in direction $\theta_\kappa$ and is $2\rho$-near-minimal at $z$; $H_\kappa$ is merely a convex $1$-Lipschitz function $K\to[0,1]$ (an average of ridge functions with nearby but different directions is in general not a ridge function), and $\norm{G_\kappa-H_\kappa}_\infty\le\rho/2$. In this proof the information-ratio hypothesis is applied, via Lemmas~\ref{lem:kernel} and~\ref{lem:cont}(c), to the finitely-valued flattened proxy $\max\{G_\kappa,G_\kappa(z)\}$, never to $H_\kappa$; $H_\kappa$ enters only through the continuity statements of Lemma~\ref{lem:cont}(a),(b), which do not require the ridge structure.

First, $\E[f^\star\mid\kappa]=\E[f(X^\star)\mid\kappa]\ge\E[g(X^\star)\mid\kappa]\ge G_\kappa(z)-\rho\ge G_\kappa^\star-\rho$ (using $g\le f$, $\norm{X^\star-z}\le\rho$ and Lipschitzness), and $\norm{\E f-\E g}_\infty\le\rho/2$; hence
\[
\Delta(\pi_{\TS},\xi)=\E[\fbar(X)-f^\star]\le\E[\bar g(X)-G_\kappa^\star]+\tfrac32\rho\le\E[\bar g(z)-G_\kappa^\star]+\tfrac52\rho=\Delta(\pi_z,\nu_G)+\tfrac52\rho.
\]
By Lemma~\ref{lem:cont}(c) applied to the finitely-valued pair $(G_\kappa,z)$, $\Delta(\pi_z,\nu_G)\le\alpha+\sqrt{\beta I(\pi_z,\nu_G)}+2\rho(1+\sqrt\beta)$. The information terms depend only on the product of the marginal laws, so the continuity statements may be applied on a product space: let $F_1,F_2$ be independent draws from the current posterior, build $(G_1,H_1)=(G_{\kappa(F_1)},H_{\kappa(F_1)})$ from $F_1$ and $(Z_2,A_2)=(x_{\kappa(F_2)},x_{F_2})$ from $F_2$. Lemma~\ref{lem:cont}(a) with the coupling $(G_1,H_1)$, $\norm{G_1-H_1}_\infty\le\rho/2$, independent of $Z_2\sim\pi_z$, gives $\sqrt{I(\pi_z,\nu_G)}\le\sqrt{I(\pi_z,\mathcal L(H_\kappa))}+\rho/2$; Lemma~\ref{lem:cont}(b) with the coupling $(Z_2,A_2)$, $\norm{Z_2-A_2}\le\rho$, independent of $H_1\sim\mathcal L(H_\kappa)$, gives $\sqrt{I(\pi_z,\mathcal L(H_\kappa))}\le\sqrt{I(\pi_{\TS},\mathcal L(H_\kappa))}+\rho$. Together $\sqrt{I(\pi_z,\nu_G)}\le\sqrt{I(\pi_{\TS},\mathcal L(H_\kappa))}+\tfrac32\rho$. Altogether
\[
\Delta(\pi_{\TS},\xi)\le\alpha+\sqrt{\beta I(\pi_{\TS},\mathcal L(H_\kappa))}+\rho\big(\tfrac92+\tfrac72\sqrt\beta\big)\le\alpha+\sqrt{\beta I(\pi_{\TS},\mathcal L(H_\kappa))}+\rho(6+4\sqrt\beta).
\]
For Bernoulli observations, $I(\pi_{\TS},\mathcal L(H_\kappa))=\E[(\E[Y\mid X]-\E[Y\mid X,\kappa])^2]\le\tfrac12\E[\mathrm{KL}(P_{Y\mid X,\kappa}\,\|\,P_{Y\mid X})]=\tfrac12\mathsf I(\kappa;X,Y)$ by Pinsker's inequality. The label $\kappa=\kappa(f)$ is a fixed finite-valued function of the environment (it depends on the fixed nets, the fixed direction map of Lemma~\ref{lem:meas} and the fixed selection, not on the round), so summing over rounds and using the chain rule for mutual information and Cauchy--Schwarz,
\[
\BReg_n(\TS,\xi)\le n\alpha+n\rho(6+4\sqrt\beta)+\sqrt{\tfrac12\beta n\,\E\textstyle\sum_t\mathsf I_{t-1}(\kappa;X_t,Y_t)}\le n\alpha+n\rho(6+4\sqrt\beta)+\sqrt{\tfrac12\beta n\log N_\rho}.\qedhere
\]
\end{proof}

\begin{proof}[Proof of Theorem~\ref{thm:main}]
For $n=1$ the regret is at most $1\le7$. Let $n\ge2$, $\alpha=\rho=1/n$, $L=\log(e+nd\max\{1,D\})$, and $h=d+1$. From Theorem~\ref{thm:ir}, $k=3072h^4$ and $m_{1/n}\le4L$, so $\sqrt\beta\le\sqrt{12\cdot4L}\,k=4\sqrt3\cdot3072\,h^4\sqrt L=12{,}288\sqrt3\,h^4\sqrt L$. From Lemma~\ref{lem:cover}, $\log N_{1/n}\le d\log(1+2nD)+d\log(1+8nD)\le8dL$. Theorem~\ref{thm:transfer} gives $\BReg_n\le1+6+4\sqrt\beta+\sqrt\beta\sqrt{4ndL}\le7+\sqrt\beta\,(4+2\sqrt{ndL})\le7+6\sqrt\beta\sqrt{ndL}$ because $\sqrt{ndL}\ge1$, i.e.\ $\BReg_n\le7+73{,}728\sqrt3\,h^4d^{1/2}\sqrt n\,L$.
\end{proof}

\section{Proofs for Section~\ref{sec:lower}}\label{app:d}

\begin{proof}[Proof of Proposition~\ref{prop:lower}]
Let $d\ge2$, $c\ge12$, $\tau=1/(4c(d+1))$. Take the standard $d$-simplex with vertices $v_0,\dots,v_d$ and barycentric coordinates $\lambda_0,\dots,\lambda_d$ (translated so that the average of the points below is $0$). For each ordered pair $i\ne j$ let $x_{ij}=(1-\tau)v_i+\tau v_j$, and let $K=\conv\{x_{ij}\}$ (full-dimensional, containing $0$, $D\le\sqrt2$). Let $u_{ij}(x)=1-\lambda_i(x)+\tau\lambda_j(x)$, an affine function, and set $v=1/4$, $\varepsilon=\tau^2/(4d)$, $\fbar\equiv v$, $s^u=\tau+\tau^2$, and
\[
f_{ij}(x)=v-\varepsilon+\varepsilon\max\Big\{\frac{s^u-u_{ij}(x)}{\tau^2},\ \frac{u_{ij}(x)-s^u}{1-2\tau^2}\Big\},\qquad x_{f_{ij}}=x_{ij}.
\]
Each $f_{ij}$ is the maximum of two affine functions, hence a convex ridge function, minimised exactly where $u_{ij}=s^u$, in particular at $x_{ij}$, with minimum $v-\varepsilon$. Evaluating $u_{ij}$ at the $d(d+1)$ points gives exactly the values $\tau+\tau^2$ (at $x_{ij}$), $\tau$ (the $d-1$ points $x_{il}$), $1-\tau^2$ ($x_{ji}$), $1+\tau-\tau^2$ ($x_{jl}$), $1-\tau$ ($x_{ki}$), $1+\tau^2$ ($x_{kj}$) and $1$ (the remaining $(d-1)(d-2)$ points), so on $K$ the values of $f_{ij}$ lie in $[v-\varepsilon,v]$, the points $x_{il}$ have $f_{ij}=v$ exactly, and all other cross values satisfy $|f_{ij}(x_{kl})-v|\le\varepsilon E_\tau$ with $E_\tau=(2\tau-\tau^2)/(1-2\tau^2)<3\tau$. Since $\delta=\varepsilon/(c(d+1))=4\tau\varepsilon$, every ordered pair is uninformative; the gaps are $r_f=\varepsilon$ and the comparison values coincide. Lipschitzness: $\norm{\nabla u_{ij}}\le(1+\tau)\sqrt d$ in the standard simplex coordinates, so $\mathrm{Lip}(f_{ij})\le\varepsilon(1+\tau)\sqrt d/\tau^2=(1+\tau)/(4\sqrt d)<1$. The functions are pairwise distinct because only $f_{ij}$ attains $v-\varepsilon$ at $x_{ij}$. Along its own direction each row sees the $d-1$ points $x_{il}$ at distance $\tau^2$ in $u_{ij}$-coordinates (the near band; the unit-direction distances of Lemma~\ref{lem:bands} are these values divided by $\norm{\nabla u_{ij}}$) and the other $d^2$ points at $u_{ij}$-distance about $1$ (the far band). Finally $D/\varepsilon\le\sqrt2\cdot4d/\tau^2=64\sqrt2\,c^2d(d+1)^2$; for every $0<\varepsilon'\le\varepsilon$ keep $K$, the selected points and the ridge directions fixed and replace each function by $f'_{ij}=v+(\varepsilon'/\varepsilon)(f_{ij}-v)$, with comparison function $\fbar'\equiv v$: the diagonal gaps and the cross errors are multiplied by $\varepsilon'/\varepsilon$ while the Lipschitz constants do not increase, so the new configuration is uninformative at scale $\varepsilon'$ (with $\delta'=\varepsilon'/(c(d+1))$) and every larger ratio $D/\varepsilon'$ is attained. The stated parameter cost for $c=96(d+1)$ follows by substituting $\tau=1/(384(d+1)^2)$ and $\varepsilon=\tau^2/(4d)$; the upper bound is Theorem~\ref{thm:size}. For $d=1$ take $K=[0,1]$, $\fbar\equiv1/4$, $f_1=1/4-\varepsilon+\varepsilon x$, $f_2=1/4-\varepsilon+\varepsilon(1-x)$ with minimisers $0,1$ and $0<\varepsilon\le1/4$, which requires $D/\varepsilon\ge4$. The bound $|C|\le2$ in dimension one follows from Lemma~\ref{lem:bands}(ii): if $x_1<x_2<x_3$ are three minimisers of an uninformative configuration, the row of $x_1$ sees $x_2$ in its far band, $x_2\ge x_3-\eta(x_3-x_1)$, and the row of $x_3$ sees $x_2$ in its far band, $x_2\le x_1+\eta(x_3-x_1)$; together $(1-2\eta)(x_3-x_1)\le0$, contradicting $\eta<1/2$. (For $D/\varepsilon\le5/12$ not even two points fit, since Lemma~\ref{lem:bands}(i) forces a distance larger than $5\varepsilon/12$.)
\end{proof}

\begin{proof}[Proof of Proposition~\ref{prop:john}]
Take the regular tetrahedron $v_0=\tfrac12(1,1,1)$, $v_1=\tfrac12(1,-1,-1)$, $v_2=\tfrac12(-1,1,-1)$, $v_3=\tfrac12(-1,-1,1)$, so $\lambda_i(x)=\tfrac14+\ip{v_i}{x}$, and use the configuration of Proposition~\ref{prop:lower} with $K=P:=\conv\{x_{ij}\}$ (a truncated tetrahedron; the twelve points are its vertices). The inscribed ball of the tetrahedron $S=\conv\{v_i\}$ is $B_r$ with $r=1/(2\sqrt3)$, and on $B_r$ one has $0\le\lambda_i\le1/2$, so $B_r\subset P$. Fix a vertex $x_{ij}$ and let $b_0=(1-2\tau)/(1-\tau)$. The affine inequality $\lambda_i+b_0\lambda_j\le1-\tau$ is violated by $x_{ij}$, holds with equality at its three neighbours $x_{ik}$ ($k\ne i,j$) and $x_{ji}$, and holds at all other vertices; since the cutting plane meets every edge from $x_{ij}$ exactly at a neighbour, $P_{-ij}:=\conv\{x_{kl}:(k,l)\ne(i,j)\}=P\cap\{\lambda_i+b_0\lambda_j\le1-\tau\}$. On $B_r$, $\max(\lambda_i+b_0\lambda_j)=\big(1+b_0+\sqrt{1+b_0^2-2b_0/3}\big)/4\le(2+\sqrt2)/4<1-\tau$ for $\tau\le1/192$, which holds for all $c\ge12$; hence $B_r\subset P_{-ij}$ for all $i\ne j$. Now let $\zeta>0$. Both $J_\zeta(C)=P+B_\zeta$ and $J_\zeta(C\setminus\{f_{ij}\})=P_{-ij}+B_\zeta$ contain $B_{r+\zeta}$ and are contained in the enlarged tetrahedron $S_\zeta=\{x:\ip{u_k}{x}\le r+\zeta,\ k=0,\dots,3\}$, where $u_k$ are the unit outer normals of $S$, which satisfy $\sum_ku_k=0$ and $\sum_ku_ku_k^{\top}=\tfrac43I$. If an ellipsoid $z+AB_1$ ($A\succ0$ symmetric) lies in $S_\zeta$ then $\ip{u_k}{z}+\norm{Au_k}\le r+\zeta$ for each $k$; summing and using $\norm{Au_k}\ge u_k^{\top}Au_k$ gives $\tfrac43\tr A\le4(r+\zeta)$, so $\det A\le(\tr A/3)^3\le(r+\zeta)^3$. Hence the maximum-volume inscribed ellipsoid of both bodies is $B_{r+\zeta}$ (equality in the arithmetic--geometric mean inequality forces $A=(r+\zeta)I$ and then $z=0$), and every single-vertex removal leaves it unchanged.
\end{proof}

\section{A limit of max-norm rounding}\label{app:e}

The tolerance $1/(4r)$ in Lemma~\ref{lem:round} has the order $1/r$. The following proposition shows that, for arbitrary $0$-$1$ matrices, no tolerance of order $1/\sqrt r$ can work; it is used only in the discussion of Section~\ref{sec:discussion}.

\begin{proposition}[Max-norm rounding cannot have a uniform $1/\sqrt r$ tolerance]\label{prop:hadamard}
For every $r=2^j-1$ with $j\ge2$ there are $A,B\in\R^{2r\times2r}$ such that $\rank A=r$, $B$ is a $0$-$1$ matrix with $\rank B=2r$, and $\norm{A-B}_{\max}=2/(r+1)$. Consequently, for every constant $a>0$ there are infinitely many $r$ for which the implication ``$\rank A\le r$ and $\norm{A-B}_{\max}\le a/\sqrt r$ for a $0$-$1$ matrix $B$ imply $\rank B\le2r-1$'' fails.
\end{proposition}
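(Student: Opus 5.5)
The construction is a block matrix built from the incidence matrix of a Hadamard design, with the rank-$r$ matrix obtained by completing a Schur complement. Let $m=r+1=2^j$ and let $\mathcal H$ be the Sylvester Hadamard matrix of order $m$, normalised so that its first row and first column consist of ones. Delete that row and column to obtain the $\pm1$ core $S\in\R^{r\times r}$, and set $N=(J+S)/2$, where $J$ is the all-ones $r\times r$ matrix. The standard facts I will use are:
\begin{itemize}
\item $N$ is the $0$-$1$ incidence matrix of a symmetric $(v,k,\lambda)=(r,(r+1)/2,(r+1)/4)$ design.
\item $NN^{\top}=(k-\lambda)I+\lambda J$ and $NJ=JN=kJ$.
\end{itemize}
These follow from $\mathcal H\mathcal H^{\top}=mI$ together with the normalisation. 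The restriction $j\ge2$ ensures that $\lambda=(r+1)/4$ is an integer and that $k-\lambda>0$.

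\textbf{Step 1 (inverse of $N$).} From the two identities above one checks directly that
\[
N^{-1}=\frac{1}{k-\lambda}\Big(N^{\top}-\frac{\lambda}{k}J\Big).
\]
With $\lambda/k=1/2$, the matrix in brackets has entries $\pm\tfrac12$. Since $k-\lambda=(r+1)/4$, every entry of $N^{-1}$ has absolute value exactly $2/(r+1)$. The only care needed here is verifying $N^{-1}N=I$ via $N^{\top}N=NN^{\top}$ for a symmetric design; this is the main computational check.

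\textbf{Step 2 (the two matrices).} Define the $2r\times 2r$ matrices
\[
B=\begin{pmatrix}N&I\\ I&0\end{pmatrix},\qquad A=\begin{pmatrix}N&I\\ I&N^{-1}\end{pmatrix}.
\]
Then $B$ is a $0$-$1$ matrix. It is non-singular because $\det B=\pm\det(I)^2\neq0$: after a block row swap it becomes block triangular with identity diagonal blocks. Hence $\rank B=2r$.

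For $A$, the upper-left block $N$ is invertible and the Schur complement is $N^{-1}-I\,N^{-1}I=0$. Therefore $\rank A=\rank N=r$.

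The difference $A-B$ vanishes except in the lower-right block, where it equals $N^{-1}$. By Step~1, $\norm{A-B}_{\max}=2/(r+1)$.

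\textbf{Step 3 (consequence).} Fix $a>0$. For all large $r$ of the form $2^j-1$,
\[
\frac{2}{r+1}<\frac{2}{r}\le\frac{a}{\sqrt r},
\]
which holds as soon as $\sqrt r\ge 2/a$. For each such $r$ the pair $(A,B)$ satisfies $\rank A\le r$ and $\norm{A-B}_{\max}\le a/\sqrt r$, yet $\rank B=2r>2r-1$. This contradicts the stated implication for infinitely many $r$.

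The only real obstacle is Step~1, namely obtaining the exact formula for $N^{-1}$ and its uniform entry size. The rest is block-matrix rank bookkeeping.
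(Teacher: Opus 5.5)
Your approach is the paper's. With $N=W$, your $A$ and $B$ are the paper's matrices $\begin{psmallmatrix}I_r&W\\W^{-1}&I_r\end{psmallmatrix}$ and $\begin{psmallmatrix}I_r&W\\0&I_r\end{psmallmatrix}$ with the two block columns interchanged. Your design identities are the paper's relations $C\mathbf 1=-\mathbf 1$ and $CC^{\top}=sI_r-J$ written in design language.

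\textbf{The error.} There is a sign error at the start, and it makes Step~1 false as written. The first row of $\mathcal H$ is all ones and every other row is orthogonal to it, so $S\mathbf 1=-\mathbf 1$. Thus each row of $S$ has $(r+1)/2$ entries $-1$ and only $(r-1)/2$ entries $+1$. Hence $N=(J+S)/2$ has row sums $(r-1)/2$. It is the \emph{complementary} design, with parameters $(r,(r-1)/2,(r-3)/4)$, not $(r,(r+1)/2,(r+1)/4)$. Your inverse formula with these parameters gives $N^{-1}=\frac{2}{r+1}S^{\top}+\frac{4}{(r+1)(r-1)}J$. Its entries are $2/(r-1)$ and $-2(r-3)/((r+1)(r-1))$, so $\norm{A-B}_{\max}=2/(r-1)$, not $2/(r+1)$. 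Already for $r=3$ your $N$ is a permutation matrix, $N^{-1}=N$, and $\norm{A-B}_{\max}=1$.

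\textbf{What survives.} The rank claims are unaffected, since this $N$ is still invertible: $NN^{\top}=\frac{r+1}{4}I+\frac{r-3}{4}J\succ0$. Step~3 also still works with $2/(r-1)$ in place of $2/(r+1)$. So the asymptotic consequence survives, but the exact value $\norm{A-B}_{\max}=2/(r+1)$ asserted in the proposition is not obtained.

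\textbf{The fix.} Take $N=(J-S)/2$, which is the paper's $W$. Then $k=(r+1)/2$, $\lambda=(r+1)/4$ and $\lambda/k=1/2$. Step~1 goes through verbatim and yields $N^{-1}=-\frac{2}{r+1}S^{\top}$, which is the paper's $W^{-1}$, with all entries $\pm2/(r+1)$.

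A minor point: Step~1 needs no check of $N^{\top}N=NN^{\top}$. The identity $N\bigl(N^{\top}-\frac{\lambda}{k}J\bigr)=(k-\lambda)I$ already exhibits a right inverse of a square matrix, which is automatically two-sided.
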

\begin{proof}
Let $s=r+1=2^j$ and let $S_s$ be the Sylvester--Hadamard matrix of order $s$, defined by $S_1=(1)$ and $S_{2m}=\begin{psmallmatrix}S_m&S_m\\S_m&-S_m\end{psmallmatrix}$; it is symmetric with entries $\pm1$, its first row and column consist of ones, and $S_sS_s^{\top}=sI_s$. Write $S_s=\begin{psmallmatrix}1&\mathbf 1^{\top}\\ \mathbf 1&C\end{psmallmatrix}$ with $C$ a symmetric $r\times r$ matrix with entries $\pm1$. Orthogonality of each of the rows $2,\dots,s$ to the first row gives $C\mathbf 1=-\mathbf 1$, and orthogonality among the rows $2,\dots,s$ gives $J+CC^{\top}=sI_r$, where $J=\mathbf 1\mathbf 1^{\top}$; that is, $CC^{\top}=sI_r-J$. Let $W=(J-C)/2$, a $0$-$1$ matrix. Using $JC^{\top}=\mathbf 1(C\mathbf 1)^{\top}=-J$,
\[
W\Big(-\frac2sC^{\top}\Big)=-\frac1s\big(JC^{\top}-CC^{\top}\big)=-\frac1s\big(-J-sI_r+J\big)=I_r,
\]
so $W$ is non-singular with $W^{-1}=-\tfrac2sC^{\top}$, whose entries are $\pm2/s$; hence $\norm{W^{-1}}_{\max}=2/(r+1)$. Now set
\[
A=\begin{pmatrix}I_r&W\\W^{-1}&I_r\end{pmatrix}=\begin{pmatrix}I_r\\W^{-1}\end{pmatrix}\begin{pmatrix}I_r&W\end{pmatrix},\qquad
B=\begin{pmatrix}I_r&W\\0&I_r\end{pmatrix}.
\]
The factorisation shows $\rank A\le r$, and the top-left block $I_r$ shows $\rank A\ge r$. $B$ is a $0$-$1$ matrix, block upper triangular with identity diagonal blocks, so $\det B=1$ and $\rank B=2r$. Finally $A-B$ has the single non-zero block $W^{-1}$, so $\norm{A-B}_{\max}=\norm{W^{-1}}_{\max}=2/(r+1)$. For the consequence, $2/(r+1)\le a/\sqrt r$ as soon as $2\sqrt r/(r+1)\le a$, which holds for all large $r$ since $2\sqrt r/(r+1)\to0$.
\end{proof}

The smallest instance is $r=3$, where the construction gives $W=J_3-I_3$ up to a row permutation, with $\norm{A-B}_{\max}=1/2<1/\sqrt3$. The proposition concerns arbitrary $0$-$1$ matrices only; the band-membership matrices of Section~\ref{sec:size} have additional structure (two bands, one of them extremely thin), and whether that structure admits a rounding statement with a larger tolerance is open.